\newtheorem{theorem}{Theorem}
\newtheorem{lemma}{Lemma}
\newtheorem{remark}{Remark}
\newtheorem{assumption}{Assumption}
\newtheorem{assumption'}{Assumption'}
\begin{document}

\title{Communication-Censored Distributed\\ Stochastic Gradient Descent}
%
%
%

\author{Weiyu~Li, Tianyi~Chen, Liping~Li, Zhaoxian~Wu, and Qing~Ling
\thanks{Weiyu Li is with School of Gifted Young, University of Science and Technology of China, Hefei, Anhui 230026, China.}
\thanks{Tianyi Chen is with Department of Electrical, Computer, and Systems Engineering, Rensselaer Polytechnic Institute, Troy, New York 12180, USA. }
\thanks{Liping Li is with Department of Automation, University of Science and Technology of China, Hefei, Anhui 230027, China.}
\thanks{Zhaoxian Wu and Qing Ling are with School of Data and Computer Science and Guangdong Province Key Laboratory of Computational Science, Sun Yat-Sen University, Guangzhou, Guangdong 510006, China. Qing Ling is supported in part by NSF China Grants 61573331 and 61973324, and Fundamental Research Funds for the Central Universities. Corresponding Email: lingqing556@mail.sysu.edu.cn.}}

%
%

%

\maketitle
\begin{abstract}
This paper develops a communication-efficient algorithm to solve the stochastic optimization problem defined over a distributed network, aiming at reducing the burdensome communication in applications such as distributed machine learning.
Different from the existing works based on quantization and sparsification, we introduce a communication-censoring technique to reduce the transmissions of variables, which leads to our communication-Censored distributed Stochastic Gradient Descent ({\bf CSGD}) algorithm.
Specifically, in CSGD, the latest mini-batch stochastic gradient at a worker will be transmitted to the server if and only if it is sufficiently informative. When the latest gradient is not available, the stale one will be reused at the server. To implement this communication-censoring strategy, the batch-size is increasing in order to alleviate the effect of stochastic gradient noise. Theoretically, CSGD enjoys the same order of convergence rate as that of SGD, but effectively reduces communication. Numerical experiments demonstrate the sizable communication saving of CSGD.
\end{abstract}
\begin{IEEEkeywords}
Distributed optimization, stochastic gradient descent (SGD), communication-efficiency, communication censoring
\end{IEEEkeywords}

\IEEEpeerreviewmaketitle

\section{Introduction}
%
%
%
%
\IEEEPARstart{C}{onsidering} a distributed network with one server and $M$ workers,
we aim to design a communication-efficient algorithm to solve the following optimization problem
\begin{equation}
\min_{\mathbf{x}\in\mathbb{R}^d}~~\sum_{m=1}^M
\mathbb{E}_{\xi_m}\big[ f_m(\mathbf{x};\xi_m)\big],\label{eq:obj}
\end{equation}
where $\mathbf{x}$ is the optimization variable,
$\{f_m\}_{m=1}^M$ are smooth local objective functions with $f_m$ being kept at worker $m$,
and $\{\xi_m\}_{m=1}^M$ are independent random variables associated with distributions $\{\mathcal{D}_m\}_{m=1}^M$.

Problem \eqref{eq:obj} arises in a wide range of science and engineering fields, e.g.,
in distributed machine learning \cite{dean2012}. For distributed machine learning, there are two major drives for solving problems in the form of \eqref{eq:obj}: i) \emph{distributed computing resources} --- for massive and high-dimensional datasets, performing the training processes over multiple workers in parallel is more efficient than relying on a single worker; and, ii) \emph{user privacy concerns} --- with massive amount of sensors nowadays, distributively collected data may contain private information about end users, and thus keeping the computation at local workers is more privacy-preserving than uploading the data to central servers.
However, the communication between the server and the workers is one of the major bottlenecks of distributed machine learning. Indeed, reducing the communication cost is also a common consideration in popular machine learning frameworks, e.g., federated learning \cite{Konecny2016,Brendan2017,smith2017,Brendan2018}.

\subsection{Prior art}
Before discussing our algorithm,
we review several existing works for solving \eqref{eq:obj} in a distributed manner.

Finding the best communication-computation tradeoff has been a long-standing problem
in distributed consensus optimization \cite{Nedic2017,Berahas2018}, since it is critical to many important engineering problems in signal processing and wireless communications \cite{gg2016}.
For the emerging machine learning tasks,
the communication efficiency has been frequently discussed during the past decade \cite{zhang2013,Li2014,jordan2018},
and it attracts more attention when the notion of federated learning becomes popular
\cite{Konecny2016,Brendan2017,smith2017,Brendan2018}.
Many dual domain methods
have been demonstrated as efficient problem-solvers \cite{Tianbao2013,Jaggi2014},
which, nonetheless, require primal-dual loops and
own empirical communication-saving performance without theoretical guarantee.

In general, there are two different kinds of strategies to save communication cost.
On the one hand, due to the limited bandwidth in practice,
transmitting compressed information, which is called {\it quantization} \cite{Tang2018,Rao2019,Alistarh2017,bernstein2018icml} or {\it sparsification} \cite{stich2018nips,alistarh2018},
is an effective method to alleviate communication burden.
In particular, the quantized version of stochastic gradient descent (SGD) has been developed \cite{Alistarh2017,bernstein2018icml}.
On the other hand, instead of consistently broadcasting the latest information,
cutoff of some ``less informative'' messages is encouraged,
which results in the so-called {\it event-triggered control} \cite{Garcia2013,Aji2017} or {\it communication censoring} \cite{Li2019,Liu2019}.
Extending the original continuous-time setting \cite{Garcia2013}
to discrete-time \cite{Aji2017},
the work of \cite{lan2017} achieves a sublinear rate of convergence,
while the work of \cite{Liu2019} shows a linear rate
and its further extension \cite{Li2019} proves both rates of convergence.
However, the above three algorithms utilize the primal-dual loops, and only rigorously establish convergence without showing communication reduction.

To the best of our knowledge, lazily aggregated gradient (LAG) proposed in \cite{Tianyi2018}
is the most up-to-date method that provably converges and saves the communication.
However, the randomness in our problem \eqref{eq:obj}
deteriorates the deterministic LAG algorithm.
Simply speaking, an exponentially increasing batch-size and an exponentially decreasing censoring control-size are designed for our proposed stochastic algorithm to converge,
while LAG can directly obtain its deterministic gradient and does not introduce a controlling term in its threshold. Direct application of LAG to multi-agent reinforcement learning does involve stochasticity, but the convergence of the resultant LAPG approach is established in the weak sense, and communication reduction critically relies on the heterogeneity of distributed agents \cite{Tianyi2018-2}.

In our work, we generalize and strengthen the results in LAG \cite{Tianyi2018} and LAPG \cite{Tianyi2018-2} to a more challenging stochastic problem, with stronger convergence and communication reduction results.
Specifically, our convergence results hold in the \emph{almost sure} sense,
thanks to a novel design of time-varying batch-size in gradient sampling and control-size in censoring threshold. More importantly, our communication reduction is \emph{universal} in the sense that the heterogeneity characteristic needed to establish communication reduction in  \cite{Tianyi2018,Tianyi2018-2} is no longer a prerequisite in our work.

\subsection{Our contributions}
Though the celebrated SGD method \cite{bottou2010} can be applied to solving \eqref{eq:obj},
it requires iterative communication and is hence less advantageous in our setting.
Consider the SGD with dynamic batch-size \cite{Bottou2018} that starts from $\bar{\mathbf{x}}^{k-1}$ at iteration $k$.
After receiving the variable $\bar{\mathbf{x}}^{k-1}$ from the server, every worker $m$ samples a batch of independent and identically distributed (i.i.d.) stochastic gradients
$\{\nabla f_m(\bar{\mathbf{x}}^{k-1};\xi_m^{k,b})\}_{b=1}^{B^k}$ with a batch-size $B^k$,
and then sends the sample mean $\bar{\nabla}_m^k:=\frac1{B^k}\sum_{b=1}^{B^k}\nabla f_m(\bar{\mathbf{x}}^{k-1};\xi_m^{k,b})$ back to the server, which aggregates all the means and performs the \textbf{SGD update} with step-size $\alpha$ as
\begin{equation}
\bar{\mathbf{x}}^{k}=\bar{\mathbf{x}}^{k-1}-\alpha \bar\nabla^k:=\bar{\mathbf{x}}^{k-1}-\alpha \sum_{m=1}^M\bar{\nabla}_m^k.\label{SGD}
\end{equation}
Therein, every worker is required to upload the latest locally averaged gradient at every iteration, which is rather expensive in communication.

To maintain the desired properties of SGD and overcome its limitations,
we design our communication-Censored distributed SGD (CSGD) method,
which leverages the {\em communication-censoring} strategy. Consider a starting point $\mathbf{x}^{k-1}$ at iteration $k$. As in SGD, every worker $m$ samples a batch of independent and identically distributed (i.i.d.) stochastic gradients
$\{\nabla f_m({\mathbf{x}}^{k-1};\xi_m^{k,b})\}_{b=1}^{B^k}$ with a batch-size $B^k$,
and then calculates the sample mean ${\nabla}_m^k:=\frac1{B^k}\sum_{b=1}^{B^k}\nabla f_m({\mathbf{x}}^{k-1};\xi_m^{k,b})$.
Seeking a desired communication-censoring strategy,
we are interested in the distance between the calculated gradient $\nabla_m^k$ at worker $m$ and
the latest uploaded one before iteration $k$ starts, denoted by $\hat\nabla_m^{k-1}$.
While other distance metrics are also available, we consider the distance in terms of $\|\nabla_m^k-\hat\nabla_m^{k-1}\|^2$,
where $\|\cdot\|$ denotes the $2$-norm of a vector.
If $\|\nabla_m^k-\hat\nabla_m^{k-1}\|^2$ is below a censoring threshold $\tau^k$, $\nabla_m^k$ is regarded as less informative and will not be transmitted.
Therefore, the latest uploaded gradient for worker $m$ up to iteration $k$, denoted as $\hat\nabla_m^k$, follows the updating rule

\begin{align} \label{equation:phi}
 \hat\nabla_m^k  =
  \begin{cases}
    \nabla_m^k, \quad\quad &\|\nabla_m^k-\hat\nabla_m^{k-1}\|^2>\tau^k, \\
    \hat\nabla_m^{k-1}, \quad\quad &{\rm otherwise}.
  \end{cases}
\end{align}
Then the server aggregates the latest received gradients in $\hat{\nabla}^k:= \sum_{m=1}^M\hat\nabla_m^k$ and performs the \textbf{CSGD update} with the step-size $\alpha$, that is,
\begin{equation}
\mathbf{x}^{k}=\mathbf{x}^{k-1}-\alpha \hat{\nabla}^k
=\mathbf{x}^{k-1}-\alpha \sum_{m=1}^M\hat\nabla_m^k,\label{CSGD}
\end{equation}
Specifically, in \eqref{equation:phi} we use the following censoring threshold
\begin{equation}
\tau^k:=\frac{1}{M^2}\bigg(\frac{w}D\sum_{d=1}^D\|\hat\nabla^{k-d}\|^2
+ \sigma^k \bigg),\label{threshold}
\end{equation}
where $\{\|\hat\nabla^{k-d}\|\}_{d=1}^D$ are $2$-norms of recent $D$ aggregated gradients with $\|\hat\nabla^{-d}\|=0$ for $d>0$,
$w$ is a weight representing the confidence of the censoring threshold, and $ \sigma^k $ controls the randomness of the stochastic part that we call {\it control-size}.
The adaptive threshold consists of a scaling factor $\frac1{M^2}$ and the sum of two parts.
The first part learns information from the previous $D$ updates,
while the second part helps alleviate the stochastic gradient noise.

Building upon this innovative censoring condition, our main contributions can be summarized as follows.
\begin{enumerate}
\item[\bf c1)] We propose communication-censored distributed SGD (CSGD) with dynamic batch-size that achieves the same order of convergence rate as the original SGD.
\item[\bf c2)] CSGD provably saves the total number of uploads to reach the targeted accuracy relative to SGD.
\item[\bf c3)] We conduct extensive experiments to show the superior performance of the proposed CSGD algorithm.
\end{enumerate}

\section{CSGD development}\label{sec:alg}
In this section, we introduce CSGD and provide some insights behind its threshold design in \eqref{threshold}.
In CSGD, at the beginning of iteration $k$, the server broadcasts its latest variable and threshold to all workers. With the considerations of data privacy and uploading burden, each worker locally computes an estimate of its gradient with batch-size $B^k$ and then decides whether to upload the fresh gradient or not. Specifically, the worker's upload will be skipped, if and only if $\|\nabla_m^k-\hat\nabla_m^{k-1}\|^2\le\tau^k$. When such a communication skipping happens, we say that the worker is {\it censored}. At the end of iteration $k$, the server only receives the latest uploaded gradients, and updates its variable via \eqref{CSGD} and the censoring threshold $\tau^{k+1}$ via \eqref{threshold}, using the magnitudes of $D$ recent updates given by $\{\|\hat\nabla^{k-d}\|\}_{d=0}^{D-1}$.
We illustrate CSGD in Figure \ref{fig:frame} and summarize it in Algorithm \ref{alg:censor}.
\begin{figure}[ht!]
\vspace{-.8cm}
\centering
\includegraphics[height=5.5cm]{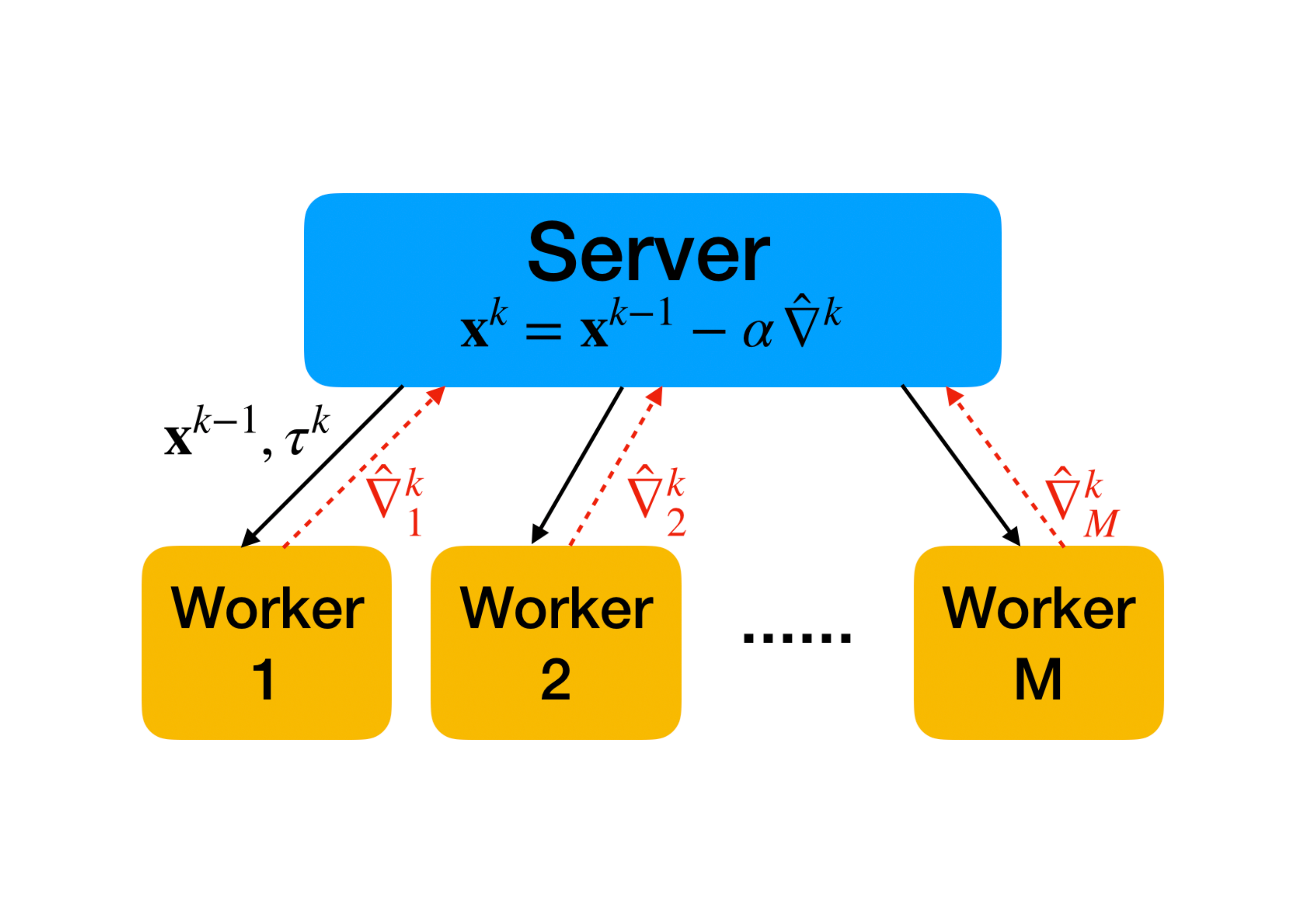}
\vspace{-1cm}
  \caption{Illustration of CSGD.}
\label{fig:frame}
\vspace{-0.5cm}
\end{figure}

\subsection{CSGD parameters}
If we choose the parameters properly, our proposed framework is general in the sense that it also recovers several existing algorithms. For the deterministic optimization problem, LAG \cite{Tianyi2018} which sets $w<1$ and $\sigma^k=0$ in \eqref{threshold} guarantees communication saving compared with the original gradient descent (i.e. $w=0,\sigma^k=0$). Note that in the deterministic case, all data are used at every iteration such that the control-size $\sigma^k$ designed to handle the randomness is not necessary and the batch-size $B^k$ is no longer a hyper-parameter. For the stochastic optimization task, setting $w=0,\sigma^k=0$ in \eqref{threshold} recovers the SGD with dynamic batch-size \cite{Bottou2018,Hao2019}.
We give some interpretations of the parameters in CSGD as follows; see also Table \ref{table1}.

\vspace{0.1cm}
\noindent\textbf{The step-size $\alpha$ and the batch-size $B^k$.}
In recent works \cite{Bottou2018,Hao2019}, SGD with constant step-size and exponentially-increasing batch-size has been studied. It achieves the $O(1/k)$ accuracy with $O(\log k)$ iterations and $O(k)$ samples of stochastic gradients. Intuitively speaking, larger step-size $\alpha$ leads to faster convergence, but requires a faster increasing rate of batch-size (which depends on $\alpha$) to control the bias from the stochastic gradient sampling. Then in total, the sampling time is in the same order regardless of the magnitude of $\alpha$.
Nonetheless, the choice of $\alpha$ cannot be arbitrary; extremely large step-size learns from the current stochastic gradient too much, thus deteriorates the convergence.

In our analysis, choosing the increasing rate of $B^k$ larger than a lower bound depending on $\alpha$ will result in a convergence rate depending only on $\alpha$, which is consistent with previous SGD works.

\vspace{0.1cm}
\noindent\textbf{The control-size $\sigma^k$.}
The term $\sigma^k$ has two implications.
\begin{enumerate}
\item It excludes some noisy uploads. When the worker takes the mean of $B^k$ stochastic gradients, the variance of the mean shrinks to $\frac1{B^k}$ of the variance of one stochastic gradient, if the variance exists. Thus, $\sigma^k$ decreasing no faster than $\frac1{B^k}$ helps the threshold to make effect in the long term.
\item As a tradeoff, the control-size may slow down the convergence. If it decreases in an extremely slow rate, the censoring threshold will be hard to reach, and the server will use the inaccurate stale gradient for a long time before receiving a fresh gradient, which affects the rate of convergence.
\end{enumerate}

In the theoretical analysis, we will theoretically show that if $\sigma^k$ decreases properly at a rate similar to those of $\frac1{B^k}$ and the objective, then CSGD converges at a comparable rate to that of SGD, but with improved communication-efficiency.

\begin{algorithm}[t]
	{
		\caption{Communication-Censored Distributed SGD}
		\begin{algorithmic}[1]\label{alg:censor}
			\REQUIRE $\alpha$, $\{B^k\}_{k=1}^K$, $\{\sigma^k\}_{k=1}^K$.\\
			\hspace{-0.5cm}\textbf{Initialize:} $\mathbf{x}^0$, $\{\hat{\nabla}_m^0\}_{m=1}^M$, $\tau^1$.
			\FOR {iterations $k = 1,2, \ldots$}
			\STATE Server broadcasts $\mathbf{x}^{k-1},\tau^k$.
			\FOR {Worker $m=1,\cdots,M$}
			\STATE Sample stochastic gradients $\{\nabla f_m(\mathbf{x}^{k-1};\xi_m^{k,b})\}_{b=1}^{B^k}$.
			\STATE Compute $\nabla_m^k=\frac1{B^k}\sum_{b=1}^{B^k}\nabla f_m(\mathbf{x}^{k-1};\xi_m^{k,b}).$
			\IF {$\|\nabla_m^k-\hat\nabla_m^{k-1}\|^2>\tau^k$}
			\STATE Worker $m$ uploads $\nabla_m^k$ to the server.
			\STATE Set $\hat\nabla_m^k = \nabla_m^k$ on the server and worker $m$.
			\ELSE \STATE {Worker $m$ does not upload.}
			\STATE Set $\hat\nabla_m^k = \hat\nabla_m^{k-1}$ on the server and worker $m$.
			\ENDIF
			\ENDFOR
			\STATE Server updates the model $\mathbf{x}^k$ via \eqref{CSGD} and $\tau^{k+1}$ via \eqref{threshold}.
			\ENDFOR
	\end{algorithmic}}
\end{algorithm}

\vspace{0.1cm}
\noindent\textbf{The confidence time $D$ and confidence weight $w$.}
Those two parameters bound how much historic information we leverage in CSGD. First, $D$ is regarded as a {\it confidence time}. Once a newly calculated local gradient is uploaded, we are confident that it will be a good approximation of the gradients in the consecutive $D$ iterations from now on. Therefore, we prefer using it to update variables for no less than $D$ times, instead of uploading a fresher gradient. In fact, the communication-saving property proved in the next section is motivated by the intuition that the upload is as sparse as no more than once in $D$ consecutive iterations.
Meanwhile, we multiply a weight $w<1$ to the historic gradients, with the consideration of lessening the impact of historic errors.

Theoretically, we specify $w=\frac1{60}$ to simplify the threshold, and constrain on step-size and batch-size such that any large $D$ is able to work as a confidence time.

\subsection{Motivation of the censoring threshold $\tau^k$}

For brevity, we stack the random variables into $\xi=[\xi_1;\ldots;\xi_M]$,
and define $f(\mathbf{x};\xi)= \sum_{m=1}^M f_m(\mathbf{x};\xi_m)$,
$F_m(\mathbf{x})=\mathbb{E}_{\xi_m}\big[ f_m(\mathbf{x};\xi_m)\big]$,
and $F(\mathbf{x})=\sum_{m=1}^M F_m(\mathbf{x})$.

The following lemma bounds how much the objectives in CSGD and SGD descend after one update.
\begin{lemma}[Objective descent] \label{lem:descent}
Suppose the gradient of the objective function $F(\mathbf{x})$ is $L$-Lipschitz continuous, then for SGD iteration \eqref{SGD}, we have for any $\bar\epsilon>0$ that
\begin{align}
&F(\bar{\mathbf{x}}^{k})-F(\bar{\mathbf{x}}^{k-1})\notag\\
\le&-\!\alpha\bigg(1 -\frac{\bar\epsilon}2 -(1+\bar\epsilon)\frac{L}2\alpha\bigg)\| \nabla F(\bar{\mathbf{x}}^{k-1})\|^2\notag\\ &+\frac{\alpha }{2\bar\epsilon}\|\nabla F(\bar{\mathbf{x}}^{k-1})- \bar{\nabla}^k\|^2\notag\\
:= &\bar{\Delta}^k(\bar{\mathbf{x}}^{k-1},\bar\epsilon).\label{lem1}
\end{align}
Likewise for CSGD iteration \eqref{CSGD}, we have for any $\epsilon>0$ that
\begin{align}
&F({\mathbf{x}}^{k})-F(\mathbf{x}^{k-1})\notag\\
\le&-\alpha\bigg(1 -\frac\epsilon2 -(1+\epsilon)\frac{L}2\alpha\bigg)\| \nabla F(\mathbf{x}^{k-1})\|^2\notag\\ &+\frac{\alpha }{\epsilon}\|\nabla F(\mathbf{x}^{k-1})- \nabla^k\|^2+\alpha M^2\bigg(\frac{1}{\epsilon}+(1+\frac1{\epsilon})\frac{L}2\alpha \bigg)\tau^k
\notag\\
:=&\Delta^k({\mathbf{x}}^{k-1},\epsilon).\label{lem2}
\end{align}
\end{lemma}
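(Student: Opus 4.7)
The plan is to derive both \eqref{lem1} and \eqref{lem2} from the standard $L$-smoothness descent inequality
\[
F(\mathbf{y})\le F(\mathbf{x})+\langle\nabla F(\mathbf{x}),\mathbf{y}-\mathbf{x}\rangle+\tfrac{L}{2}\|\mathbf{y}-\mathbf{x}\|^{2},
\]
applied to one step of the respective update rule, and then decompose the cross term $\langle\nabla F,\cdot\rangle$ and the quadratic term $\|\cdot\|^{2}$ using the ``error'' variable (true gradient minus update direction) together with Young's inequality $\pm 2\langle a,b\rangle\le\eta\|a\|^{2}+\eta^{-1}\|b\|^{2}$ and its squared-norm corollary $\|a+b\|^{2}\le(1+\eta)\|a\|^{2}+(1+\eta^{-1})\|b\|^{2}$ tuned by the free parameter $\bar\epsilon$ (resp.\ $\epsilon$).

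For the SGD bound \eqref{lem1} I would plug $\mathbf{y}=\bar{\mathbf{x}}^{k}=\bar{\mathbf{x}}^{k-1}-\alpha\bar\nabla^{k}$ into the descent inequality, write $-\alpha\langle\nabla F(\bar{\mathbf{x}}^{k-1}),\bar\nabla^{k}\rangle=-\alpha\|\nabla F(\bar{\mathbf{x}}^{k-1})\|^{2}+\alpha\langle\nabla F(\bar{\mathbf{x}}^{k-1}),\nabla F(\bar{\mathbf{x}}^{k-1})-\bar\nabla^{k}\rangle$, apply Young's inequality with parameter $\bar\epsilon$ to the second piece, and bound $\frac{L\alpha^{2}}{2}\|\bar\nabla^{k}\|^{2}$ via $\|\bar\nabla^{k}\|^{2}\le(1+\bar\epsilon)\|\nabla F(\bar{\mathbf{x}}^{k-1})\|^{2}+(1+\bar\epsilon^{-1})\|\nabla F(\bar{\mathbf{x}}^{k-1})-\bar\nabla^{k}\|^{2}$. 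Collecting coefficients of $\|\nabla F(\bar{\mathbf{x}}^{k-1})\|^{2}$ yields the advertised factor $1-\tfrac{\bar\epsilon}{2}-(1+\bar\epsilon)\tfrac{L\alpha}{2}$, and the coefficients of $\|\nabla F(\bar{\mathbf{x}}^{k-1})-\bar\nabla^{k}\|^{2}$ collapse to the term $\tfrac{\alpha}{2\bar\epsilon}$ shown in the lemma (with any remaining higher-order $L\alpha^{2}$ piece absorbable).

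For the CSGD bound \eqref{lem2} I would proceed in parallel with $\mathbf{y}=\mathbf{x}^{k}=\mathbf{x}^{k-1}-\alpha\hat\nabla^{k}$, but now insert an extra splitting $\nabla F(\mathbf{x}^{k-1})-\hat\nabla^{k}=(\nabla F(\mathbf{x}^{k-1})-\nabla^{k})+(\nabla^{k}-\hat\nabla^{k})$ to separate the sampling error from the censoring error. The censoring error is controlled termwise by the rule \eqref{equation:phi}: in the uploading branch $\hat\nabla_{m}^{k}=\nabla_{m}^{k}$, while in the non-uploading branch $\|\hat\nabla_{m}^{k}-\nabla_{m}^{k}\|^{2}=\|\hat\nabla_{m}^{k-1}-\nabla_{m}^{k}\|^{2}\le\tau^{k}$, so $\|\hat\nabla_{m}^{k}-\nabla_{m}^{k}\|^{2}\le\tau^{k}$ uniformly. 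Aggregating via Cauchy--Schwarz $\|\sum_{m=1}^{M}a_{m}\|^{2}\le M\sum_{m=1}^{M}\|a_{m}\|^{2}$ gives $\|\hat\nabla^{k}-\nabla^{k}\|^{2}\le M^{2}\tau^{k}$, which is exactly what produces the $\alpha M^{2}\tau^{k}$ term. The Young-inequality bookkeeping with parameter $\epsilon$ (applied both to the cross term and to $\|\hat\nabla^{k}\|^{2}$) then distributes the $(1+\epsilon^{-1})$ and $\tfrac{L\alpha}{2}$ factors exactly as written in \eqref{lem2}.

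The only delicate step is organizing the two-layer decomposition for CSGD so that the sampling error $\|\nabla F(\mathbf{x}^{k-1})-\nabla^{k}\|^{2}$ and the censoring contribution $M^{2}\tau^{k}$ end up with the claimed coefficients, in particular making sure the factor of two that appears when splitting $\|\nabla F(\mathbf{x}^{k-1})-\hat\nabla^{k}\|^{2}\le 2\|\nabla F(\mathbf{x}^{k-1})-\nabla^{k}\|^{2}+2\|\nabla^{k}-\hat\nabla^{k}\|^{2}$ combines cleanly with the Young parameter $\epsilon$ to give $\tfrac{\alpha}{\epsilon}$ instead of $\tfrac{\alpha}{2\epsilon}$. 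Everything else is bookkeeping; no probabilistic argument is needed here because the bound is pathwise (expectations will be taken later, in the global convergence analysis).
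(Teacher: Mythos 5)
Your overall skeleton matches the paper's: one step of $L$-smoothness, the decomposition $-\alpha\langle\nabla F(\mathbf{x}^{k-1}),\hat\nabla^k\rangle=-\alpha\|\nabla F(\mathbf{x}^{k-1})\|^2+\alpha\langle\nabla F(\mathbf{x}^{k-1}),\nabla F(\mathbf{x}^{k-1})-\hat\nabla^k\rangle$, Young's inequality, and the censoring rule \eqref{equation:phi} aggregated by Cauchy--Schwarz to get $\|\hat\nabla^k-\nabla^k\|^2\le M^2\tau^k$. Your cross-term bookkeeping (Young with parameter $\epsilon$, then the factor-$2$ split of $\|\nabla F(\mathbf{x}^{k-1})-\hat\nabla^k\|^2$) is in fact equivalent to the paper's \eqref{eq:nabla-hat2} and yields the correct $\frac{\epsilon}{2}\|\nabla F(\mathbf{x}^{k-1})\|^2+\frac{1}{\epsilon}\|\nabla F(\mathbf{x}^{k-1})-\nabla^k\|^2+\frac{1}{\epsilon}M^2\tau^k$. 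The genuine gap is in how you handle the quadratic term $\frac{L}{2}\alpha^2\|\hat\nabla^k\|^2$ (and its SGD analogue $\frac{L}{2}\alpha^2\|\bar\nabla^k\|^2$): your bookkeeping does not produce the stated coefficients, and the leftover pieces are not ``absorbable.''

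Concretely, for SGD you expand $\|\bar\nabla^k\|^2\le(1+\bar\epsilon)\|\nabla F(\bar{\mathbf{x}}^{k-1})\|^2+(1+\frac{1}{\bar\epsilon})\|\nabla F(\bar{\mathbf{x}}^{k-1})-\bar\nabla^k\|^2$, which leaves an extra $(1+\frac{1}{\bar\epsilon})\frac{L}{2}\alpha^2\|\nabla F(\bar{\mathbf{x}}^{k-1})-\bar\nabla^k\|^2$ on top of the claimed $\frac{\alpha}{2\bar\epsilon}$; since \eqref{lem1} is a pathwise bound (no expectations yet), this error norm is an uncontrolled quantity, and a positive multiple of it cannot be absorbed into the negative $\|\nabla F\|^2$ term or anywhere else. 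For CSGD, expanding $\|\hat\nabla^k\|^2$ around $\nabla F(\mathbf{x}^{k-1})$ and then splitting with factor $2$ gives the $\tau^k$-coefficient $\alpha M^2\big(\frac{1}{\epsilon}+(1+\frac{1}{\epsilon})L\alpha\big)$ --- double the claimed $(1+\frac{1}{\epsilon})\frac{L}{2}\alpha$ part --- plus an extra $(1+\frac{1}{\epsilon})L\alpha^2\|\nabla F(\mathbf{x}^{k-1})-\nabla^k\|^2$, so you prove a strictly weaker inequality than \eqref{lem2}. The paper's proof avoids both problems by expanding the quadratic term around $\nabla^k$ rather than around $\nabla F$: its inequality \eqref{eq:nabla-hat} reads $\|\hat\nabla^k\|^2\le(1+\epsilon)\|\nabla^k\|^2+(1+\frac{1}{\epsilon})M^2\tau^k$, where the censoring error enters exactly once, with no factor $2$ and no sampling-error contamination. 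That asymmetric treatment --- sampling/censoring split only inside the inner product, censoring bound alone inside the squared norm --- is the missing idea in your plan. (For completeness: even the paper is loose at one spot, since its route leaves $(1+\epsilon)\frac{L}{2}\alpha^2\|\nabla^k\|^2$, which the final line of the paper's proof silently rewrites as $(1+\epsilon)\frac{L}{2}\alpha^2\|\nabla F(\mathbf{x}^{k-1})\|^2$; but the coefficients of $\tau^k$ and of $\|\nabla F(\mathbf{x}^{k-1})-\nabla^k\|^2$ come out exactly as stated only via the paper's expansion, not via yours.)
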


\begin{table*}
\centering
\begin{tabular}{|c|c|c|}
\hline
{\bf Notation} & {\bf Description} & {\bf Theoretically suggested value} \\ \hline
$\alpha$        &    step-size         &   $\alpha \le\min\{\frac{3}{2D\mu},\frac1{3L},\frac1{6\sqrt{5 \max_mL_m}MD}\}$ \\ \hline
$B^k$            &    batch-size       &    $B^k\ge B^0(1-\eta_1)^{-k}$   \\ \hline
$\sigma^k$    &    control-size     &    $\sigma^k=\sigma^0(1-\eta_2)^k$                 \\ \hline
$D$               &  confidence time &   $D\ge2$ (Polyak-\L ojasiewicz) or $D\ge10$ (nonconvex) \\ \hline
$w$         	     &   confidence weight &   $w=\frac{1}{60}$                 \\ \hline
\end{tabular}
\vspace{0.3cm}
\caption{Important parameters and their theoretical settings that provably save communication.}
\label{table1}
\vspace{-0.3cm}
\end{table*}

Recall the confidence time interpretation of the constant $D$ in \eqref{threshold}. Ideally, in CSGD, an uploaded gradient will be used for at least $D$ iterations, and thus the number of communication reduces to at most $\frac1D$ of the uncensored SGD. At the same time, the objective may descend less in CSGD relative to SGD. Nonetheless, conditioned on the same starting point $\bar{\mathbf{x}}^{k-1}={\mathbf{x}}^{k-1}$ at iteration $k$, if the objective descents of CSGD and SGD satisfy
\begin{equation}
\frac{-\Delta^k({\mathbf{x}}^{k-1},\epsilon)}{-\bar{\Delta}^k({\mathbf{x}}^{k-1},\bar\epsilon)}\ge \frac1D\label{com-aim}
\end{equation}
then CSGD still outperforms SGD in terms of communication efficiency.
Equivalently, we write \eqref{com-aim} as
\begin{align}
\!\!\!\tau^k\!\le\! \frac1{M^2}\left(w\| \nabla F(\mathbf{x}^{k-1})\|^2 + c\|\nabla F(\mathbf{x}^{k-1})- \nabla^k\|^2 \right),\label{eq:tauk-bound}
\end{align}
where $$w=\frac{\left(1 -\frac\epsilon2 -(1+\epsilon)\frac{L}2\alpha\right)-\frac1D\left(1 -\frac{\bar\epsilon}2 -(1+\bar\epsilon)\frac{L}2\alpha\right)}{\frac{1}{\epsilon}+(1+\frac1{\epsilon})\frac{L}2\alpha}$$ and $$c=\frac{\frac1{2D\bar\epsilon}-\frac1\epsilon}{\frac{1}{\epsilon}+(1+\frac1{\epsilon})\frac{L}2\alpha}$$ are two constants.

Intuitively, larger $\tau^k$ increases the possibility of censoring communication.
However, the right-hand side of \eqref{eq:tauk-bound} is not available at the beginning of iteration $k$, since we know neither $\nabla F(\mathbf{x}^{k-1})$ nor $\nabla^k$. Instead, we will approximate $\|\nabla F(\mathbf{x}^{k-1})\|^2$ using the aggregated gradients in the recent $D$ iterations, that is,
\begin{align*}
\|\nabla F(\mathbf{x}^{k-1})\|^2\approx\frac1D\sum_{d=1}^D\|\hat\nabla^{k-d}\|^2.
\end{align*}
Further controlling
$c\|\nabla F(\mathbf{x}^{k-1})-\nabla^k\|^2$ by $\sigma^k$, \eqref{eq:tauk-bound} becomes
\begin{equation*}
\tau^k=\underbrace{\frac{1}{M^2}}_\text{scaling}\bigg(w\cdot\underbrace{\frac{1}D\sum_{d=1}^D\|\hat\nabla^{k-d}\|^2}_\text{gradient approximation}
+\underbrace{ \sigma^k }_\text{error control}\bigg),
\end{equation*}
which leads to the CSGD threshold \eqref{threshold}.


\section{Theoretical results}\label{sec:thm}

In this section, we study how the introduction of censoring in CSGD affects the convergence as well as the communication, compared to the uncensored SGD. The proofs are given in the appendix.
Before presenting our theoretical results, we first provide the following sufficient conditions.
\begin{assumption}[Aggregate function]
The aggregate function $f(\mathbf{x};\xi)$ and its expectation $F(\mathbf{x})$ satisfy:
\begin{enumerate}
    \item Smoothness: $\nabla F(\mathbf{x})$ is $L$-Lipschitz continuous..
    \item Bounded variance: for any $\mathbf{x}\in\mathbb{R}^d$, there exists $G<\infty$ such that
    \begin{equation}
        \mathbb{E}\big\|\nabla f(\mathbf{x};\xi)-\nabla F(\mathbf{x})\big\|^2\le G^2.\label{eq:var-opt}
    \end{equation}
\end{enumerate}
\label{ass:global}
\end{assumption}

\begin{assumption}[Local functions]
Two conditions on the function per worker are given as follows.
\begin{enumerate}
    \item Smoothness: for each $m$, $\nabla F_m(\mathbf{x})$ is $L_m$-Lipschitz continuous.
    \item Bounded variance: for any $\mathbf{x}\in\mathbb{R}^d$ and $m$, there exists $G_m<\infty$ such that
\begin{equation}
    \mathbb{E}\big\|\nabla f_m(\mathbf{x};\xi_m)-\nabla F_m(\mathbf{x})\big\|^2\le G_m^2.\label{eq:sub-gau}
\end{equation}
\end{enumerate}
\label{ass:local}
\end{assumption}

Notice that Assumption \ref{ass:local} is sufficient for Assumption \ref{ass:global} to hold with $L=\sum_{m=1}^M L_m$, and the independence of $\{\xi_m\}_{m=1}^M$ leads to $G^2=\sum_{m=1}^M G_m^2$.

\subsection{Polyak-\L ojasiewicz case}

In the first part, we will assume the Polyak-\L ojasiewicz condition \cite{Karimi2016},
which is generally weaker than strong convexity, or even convexity.
\begin{assumption}[Polyak-\L ojasiewicz condition]
There exists a constant $\mu>0$ such that for any $\mathbf{x}$, we have
\begin{equation}
    2\mu \left(F(\mathbf{x})-F^*\right)\le\|\nabla F(\mathbf{x})\|^2,\label{PL}
\end{equation}
where $F^*$ is the minimum of \eqref{eq:obj}.
\label{ass:convex}
\end{assumption}

Define the {\bf Lyapunov function} for CSGD as
\begin{equation}\label{Lyapunov}
V^k:=F(\mathbf{x}^k)-F^*
+\sum_{d=1}^D\beta_d\|\hat\nabla^{k-d+1}\|^2,
\end{equation}
where $\{\beta_d=\frac{D+1-d}{9D}\alpha\}_{d=1}^D$ is a set of constant weights.
Analogously, the \textbf{Lyapunov function} for uncensored SGD is defined as
\begin{equation}\label{Lyapunov-sgd}
\bar{V}^k:=F(\bar{\mathbf{x}}^k)-F^*
+\sum_{d=1}^D\beta_d\|\bar\nabla^{k-d+1}\|^2.
\end{equation}

The following theorem guarantees the almost sure (a.s.) convergence of CSGD.
\begin{theorem}[Almost sure convergence] \label{thm:as}
Under Assumptions \ref{ass:global} and \ref{ass:convex}, if we choose $w=\frac1{60}$, $\alpha \le \min\{\frac{3}{2D\mu}, \frac1{3L}\}$, and $\sum_{k=1}^\infty\sigma^k$, $\sum_{k=1}^\infty\frac1{B^k}<\infty$, then it follows that
\begin{align}\label{eq:thm1}
\lim_{k\rightarrow \infty} V^k=0~~~{\rm and}~~~\lim_{k\rightarrow \infty} F(\mathbf{x}^k)=F^*~~~{\rm a.s}.
\end{align}
\end{theorem}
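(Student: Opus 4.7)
The plan is to cast the theorem as an application of the Robbins--Siegmund almost sure supermartingale convergence theorem to the Lyapunov sequence $\{V^k\}$. The key is to produce an almost sure inequality of the form $\mathbb{E}[V^k\mid\mathcal{F}^{k-1}]\le V^{k-1}-\rho\,(F(\mathbf{x}^{k-1})-F^*)+\gamma^k$, where $\mathcal{F}^{k-1}$ is the natural filtration generated by $\{\xi_m^{j,b}\}_{j\le k-1}$, $\rho>0$ is a constant, and $\gamma^k$ is a summable sequence built from the tails $\sigma^k$ and $G^2/B^k$. Once that is in hand, Robbins--Siegmund delivers both the existence of $\lim V^k$ a.s.\ and $\sum_{k}(F(\mathbf{x}^k)-F^*)<\infty$ a.s., so that $F(\mathbf{x}^k)\to F^*$ a.s.; then the PL inequality and the decaying censoring threshold force the history term $\sum_{d=1}^D\beta_d\|\hat\nabla^{k-d+1}\|^2$ to vanish, giving $V^k\to0$.

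First I would invoke Lemma~1 with the CSGD update to bound $F(\mathbf{x}^k)-F(\mathbf{x}^{k-1})$ by $\Delta^k(\mathbf{x}^{k-1},\epsilon)$. Next I would take conditional expectation: since $\nabla_m^k$ is an average of $B^k$ i.i.d.\ samples around $\nabla F_m(\mathbf{x}^{k-1})$, Assumption~1 gives $\mathbb{E}[\,\|\nabla F(\mathbf{x}^{k-1})-\nabla^k\|^2\mid\mathcal{F}^{k-1}]\le G^2/B^k$. Substituting the explicit threshold $\tau^k=\frac{1}{M^2}\bigl(\tfrac{w}{D}\sum_{d=1}^D\|\hat\nabla^{k-d}\|^2+\sigma^k\bigr)$ into the descent bound absorbs the $M^2\tau^k$ contribution into a weighted sum of past aggregated gradients plus the summable control size $\sigma^k$. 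Separately I need to control the newly entering Lyapunov term $\|\hat\nabla^k\|^2$: writing $\hat\nabla^k=\nabla F(\mathbf{x}^{k-1})+(\nabla^k-\nabla F(\mathbf{x}^{k-1}))+(\hat\nabla^k-\nabla^k)$ and using the censoring rule $\|\hat\nabla_m^k-\nabla_m^k\|^2\le\tau^k$ for each worker, a Cauchy--Schwarz/$3$-term inequality yields $\|\hat\nabla^k\|^2\le 3\|\nabla F(\mathbf{x}^{k-1})\|^2+3\|\nabla^k-\nabla F(\mathbf{x}^{k-1})\|^2+3M^2\tau^k$ (up to the arithmetic mean inequality constants one would actually pick).

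The central algebraic step is to combine these bounds, shift the index of the historical sum in $V^k$, and check that the coefficient of every surviving $\|\hat\nabla^{k-d}\|^2$ term is non-positive. Using $\beta_d=(D+1-d)\alpha/(9D)$ produces the identity $\beta_d-\beta_{d+1}=\alpha/(9D)$, so the shift contributes $-\alpha/(9D)\cdot\sum_{d=1}^D\|\hat\nabla^{k-d+1}\|^2$; matching this against the positive contributions of $M^2\tau^k$ (which weight each past gradient by a constant multiple of $w/D$) is exactly what forces the particular calibration $w=1/60$. Together with the step-size restriction $\alpha\le\min\{3/(2D\mu),1/(3L)\}$ and a judicious choice of $\epsilon$, one obtains after collecting terms
\begin{equation*}
\mathbb{E}[V^k\mid\mathcal{F}^{k-1}]\le V^{k-1}-c_1\alpha\|\nabla F(\mathbf{x}^{k-1})\|^2+c_2\,\sigma^k+c_3\,G^2/B^k,
\end{equation*}
with $c_1,c_2,c_3>0$. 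The PL inequality $\|\nabla F(\mathbf{x}^{k-1})\|^2\ge 2\mu(F(\mathbf{x}^{k-1})-F^*)$ converts the negative term into $-2c_1\alpha\mu(F(\mathbf{x}^{k-1})-F^*)$, which is bounded below by a multiple of $-V^{k-1}$ only after accounting for the historical $\beta_d$-weighted part; a clean way is to drop the PL step for the Robbins--Siegmund application and simply keep the negative $\|\nabla F\|^2$ term as the ``stochastic descent.''

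Applying Robbins--Siegmund then yields $V^k\to V_\infty$ a.s.\ and $\sum_{k}\|\nabla F(\mathbf{x}^{k-1})\|^2<\infty$ a.s. Combined with PL, this gives $F(\mathbf{x}^k)\to F^*$ a.s. To upgrade to $V^k\to0$, observe that along this subsequence $\|\hat\nabla^k\|^2$ is controlled by $\|\nabla F(\mathbf{x}^{k-1})\|^2$, $G^2/B^k$ and $\tau^k$, all of which vanish (the first by summability, the latter two by hypothesis and by the fact that the history term in $\tau^k$ inherits the decay of the $\hat\nabla$ norms). Hence $V_\infty=0$ almost surely. The main obstacle is the bookkeeping in the previous paragraph: ensuring that the specific constants $w=1/60$ and $\beta_d=(D+1-d)\alpha/(9D)$ make every coefficient in the telescoping sum sign-correct for all $D\ge 2$ simultaneously, while still leaving enough negative mass on $\|\nabla F(\mathbf{x}^{k-1})\|^2$ to drive the supermartingale argument. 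A secondary, but purely technical, point is filtration care, since $\hat\nabla^k$ depends on the fresh sample $\nabla^k$ and on the censoring decision, both of which are $\mathcal{F}^k$-measurable but not $\mathcal{F}^{k-1}$-measurable.
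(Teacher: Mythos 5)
Your proposal is correct in substance, but it reaches the conclusion by a genuinely different route than the paper. Both arguments start from the same one-step Lyapunov descent bound (your reconstruction of it -- the three-term expansion of $\|\hat\nabla^k\|^2$, the bound $\mathbb{E}[\|\nabla F(\mathbf{x}^{k-1})-\nabla^k\|^2\,|\,\mathcal{F}_{k-1}]\le G^2/B^k$, and the telescoping of the $\beta_d$ weights against the $w/D$ contribution of $\tau^k$ -- is exactly the paper's Lemma on Lyapunov descent). The divergence is in how the limit is identified. The paper keeps the PL step \emph{inside} the descent lemma so as to obtain a full contraction $\mathbb{E}[V^k-V^{k-1}|\mathcal{F}_{k-1}]\le-\rho V^{k-1}+R^k$ with $\rho=\frac13\mu\alpha$; it then invokes a self-proved quasi-martingale convergence lemma (built from Doob's submartingale theorem via a sub/supermartingale decomposition) to get $V^k\to V^\infty\ge0$ a.s., and finally rules out $V^\infty>0$ by a contradiction argument: on any positive-probability event where $V^{k-1}\ge e/2$ eventually, the drift $-\rho e/2+R^k$ accumulates to $-\infty$, contradicting $V^k\ge0$. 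You instead drop the PL step from the drift, apply Robbins--Siegmund to get both $V^k\to V_\infty$ and the stronger byproduct $\sum_k\|\nabla F(\mathbf{x}^{k-1})\|^2<\infty$ a.s., then use PL only afterwards to conclude $F(\mathbf{x}^k)\to F^*$, and kill the history term $\sum_d\beta_d\|\hat\nabla^{k-d+1}\|^2$ by a separate recursion argument. Your route buys a citable standard theorem in place of the paper's hand-built lemma, plus the a.s.\ summability of the gradient norms; the paper's route buys a shorter endgame, since the contraction $-\rho V^{k-1}$ disposes of the whole Lyapunov function (history term included) in one contradiction.

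Two technical points in your final step need tightening. First, $G^2/B^k$ bounds only the \emph{conditional expectation} of the sampling error, so to claim the realized error $\|\nabla^k-\nabla F(\mathbf{x}^{k-1})\|^2$ vanishes a.s.\ you should note that $\mathbb{E}\sum_k\|\nabla^k-\nabla F(\mathbf{x}^{k-1})\|^2\le\sum_k G^2/B^k<\infty$ (Tonelli), hence the sum is a.s.\ finite and the terms vanish. Second, your statement that ``the history term in $\tau^k$ inherits the decay of the $\hat\nabla$ norms'' is circular as written: the clean fix is to observe that the recursion $\|\hat\nabla^k\|^2\le\epsilon_k+\frac{3w}{D}\sum_{d=1}^D\|\hat\nabla^{k-d}\|^2$ with $\epsilon_k\to0$ has contraction factor $3w=\frac1{20}<1$, so $\limsup_k\|\hat\nabla^k\|^2\le\frac1{20}\limsup_k\|\hat\nabla^k\|^2$ forces the limsuperior to be zero. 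With these two repairs your argument is complete.
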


In addition to the asymptotic convergence, we establish the linear convergence rate for CSGD.
\begin{theorem}[Convergence rate] \label{thm:rate}
Under the same assumptions and parameter settings as those in Theorem \ref{thm:as},
further denote $\rho=\frac13\mu\alpha$ and assume
\begin{equation}\label{stepsize}
B^k\ge B^0(1-\eta_1)^{-k},~~~\sigma^k\le\sigma^0(1-\eta_2)^k
\end{equation}
for some $\eta_1,\eta_2>\rho$. Then conditioned on the same initial point $\mathbf{x}^0$, we have
\begin{equation}
\hspace{-.4em}\mathbb{E}[V^k|\mathbf{x}^0]\le C_{\rm CSGD}(1-\rho)^k,\  \mathbb{E}[\bar{V}^k|\mathbf{x}^0]\le C_{\rm SGD}(1-\rho)^k,\label{Vbound}
\end{equation}
where $C_{\rm CSGD}=V^0+\frac{\alpha(1-\rho)}3(\frac{10\sigma^0}{\eta_2-\rho}+\frac{7G^2}{B^0(\eta_1-\rho)}),$ $C_{\rm SGD}= V^0+\frac{7\alpha(1-\rho)G^2}{3B^0(\eta_1-\rho)}$ are two constants.
\end{theorem}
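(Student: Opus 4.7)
The plan is to establish a one-step recursion of the form $\mathbb{E}[V^k \mid \mathcal{F}_{k-1}] \le (1-\rho)V^{k-1} + R^k$ where $R^k$ is a geometrically decaying residual, then unroll it. I would start with the easier SGD side. Apply Lemma 1 to $\bar{V}^k$, take conditional expectation, and use the i.i.d. batch-sampling identity $\mathbb{E}\|\nabla F(\bar{\mathbf{x}}^{k-1})-\bar\nabla^k\|^2 \le G^2/B^k$ (a direct consequence of Assumption 1.2). Choosing $\bar\epsilon$ of order one together with $\alpha\le 1/(3L)$ makes the coefficient of $\|\nabla F\|^2$ strictly positive; applying the PL condition \eqref{PL} then converts this into a contraction on $F(\bar{\mathbf{x}}^{k-1})-F^*$. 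The gradient-norm terms $\beta_d \|\bar\nabla^{k-d+1}\|^2$ in $\bar V^k$ can be handled by using $\mathbb{E}\|\bar\nabla^{k}\|^2 \le 2\|\nabla F(\bar{\mathbf{x}}^{k-1})\|^2 + 2G^2/B^k$ and then observing the telescoping pattern forced by $\beta_d=\frac{D+1-d}{9D}\alpha$, which redistributes the weight one slot forward at each iteration while leaving a controlled surplus on the newest term.

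For CSGD the same blueprint works, but Lemma 1 now leaves the extra piece $\alpha M^2(\frac1\epsilon + (1+\frac1\epsilon)\frac{L}{2}\alpha)\tau^k$. This is exactly where the censoring threshold \eqref{threshold} pays off: the scaling $1/M^2$ cancels the $M^2$, and the two components of $\tau^k$ split naturally. The $\sigma^k$ part goes into the residual $R^k$; the $\frac{w}{D}\sum_{d=1}^D \|\hat\nabla^{k-d}\|^2$ part is absorbed into the Lyapunov tail $\sum_d \beta_d \|\hat\nabla^{k-d+1}\|^2$. Here I would also need the worker-level bound $\mathbb{E}\|\nabla_m^k-\nabla F_m(\mathbf{x}^{k-1})\|^2 \le G_m^2/B^k$, summed via the independence of $\{\xi_m\}$, to control $\mathbb{E}\|\nabla F(\mathbf{x}^{k-1})-\nabla^k\|^2 \le G^2/B^k$, and a similar decomposition $\mathbb{E}\|\hat\nabla^k\|^2 \le 3\|\nabla F(\mathbf{x}^{k-1})\|^2 + 3M^2\tau^k + 3G^2/B^k$ relating the aggregated censored gradient to the true gradient, the threshold, and the sampling noise.

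With the choice $w=1/60$, $\alpha\le \min\{3/(2D\mu), 1/(3L)\}$, and $\rho=\frac13\mu\alpha$, the coefficient bookkeeping should yield
\begin{equation*}
\mathbb{E}[V^k \mid \mathcal{F}_{k-1}] \le (1-\rho) V^{k-1} + \frac{\alpha}{3}\Bigl(\frac{10\,\sigma^k}{1} + \frac{7\,G^2}{B^k}\Bigr),
\end{equation*}
(up to harmless constant tweaks), and the analogous inequality without the $\sigma^k$ term for $\bar V^k$. Taking total expectations and unrolling,
\begin{equation*}
\mathbb{E}[V^k\mid \mathbf{x}^0] \le (1-\rho)^k V^0 + \frac{\alpha}{3}\sum_{j=1}^{k}(1-\rho)^{k-j}\Bigl(10\sigma^j + \tfrac{7G^2}{B^j}\Bigr).
\end{equation*}
Substituting the geometric assumptions \eqref{stepsize} with $\eta_1,\eta_2>\rho$, each sum is a geometric series bounded by $(1-\rho)^k$ times $\sigma^0/(\eta_2-\rho)$ or $G^2/(B^0(\eta_1-\rho))$, yielding the stated constants $C_{\mathrm{CSGD}}$ and $C_{\mathrm{SGD}}$.

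The step I expect to be hardest is verifying that the precise weights $\beta_d=\frac{D+1-d}{9D}\alpha$ together with $w=1/60$ make the Lyapunov drift inequality close at rate exactly $(1-\rho)$; this requires carefully matching the coefficient of $\|\nabla F(\mathbf{x}^{k-1})\|^2$ coming from the PL contraction against the extra $\|\nabla F\|^2$ seeping in through the $\|\hat\nabla^{k}\|^2 \le 3\|\nabla F\|^2 + \cdots$ bound, and against the $w/D$ term from $\tau^k$, while keeping $\epsilon$ tuned so that Lemma 1's prefactor remains large enough. The constraints $\alpha \le 3/(2D\mu)$ and the assumption $D\ge 2$ are precisely what leave enough slack for this coefficient balancing to go through.
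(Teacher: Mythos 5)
Your proposal follows essentially the same route as the paper's proof: the one-step Lyapunov drift inequality $\mathbb{E}[V^k\mid\mathcal{F}_{k-1}]\le(1-\rho)V^{k-1}+\frac{\alpha}{3}(10\sigma^k+7G^2/B^k)$ with the threshold's $\frac{w}{D}\sum_d\|\hat\nabla^{k-d}\|^2$ term absorbed into the Lyapunov tail via the weights $\beta_d$, the bounds $\mathbb{E}\|\nabla F(\mathbf{x}^{k-1})-\nabla^k\|^2\le G^2/B^k$ and $\mathbb{E}\|\hat\nabla^k\|^2\le 3(M^2\tau^k+G^2/B^k+\|\nabla F(\mathbf{x}^{k-1})\|^2)$, followed by unrolling and geometric summation, exactly as in the paper's Lemma 3 and Step 2 of Appendix C. The only stray remark is your appeal to $D\ge 2$, which is not needed here (it only enters the communication-saving theorem); the drift closes for any $D\ge 1$ under $\alpha\le\frac{3}{2D\mu}$.
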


Theorem \ref{thm:rate} implies that even if CSGD skips some communications, its convergence rate is still in the same order as that of the original SGD.
We define the \textbf{$\nu$-iteration complexity} of CSGD as $K=\min\{k:C_{\rm CSGD}(1-\rho)^k\le\nu\}$ and \textbf{$\nu$-communication complexity} as the total number of uploads up to iteration $K$. Analogously defining the complexities for SGD,
we will compare the communication complexities of these two approaches in the following theorem.
\begin{theorem}[Communication saving] \label{thm:sav}
Under Assumptions \ref{ass:local} and \ref{ass:convex},
set $w=\frac1{60}$,
$\alpha \le\min\{\frac{3}{2D\mu},\frac1{3L},\frac1{6\sqrt{5 \max_mL_m}MD}\}$.
Further, denote $\rho=\frac13\mu\alpha$ and assume
\begin{equation}\label{batchsize}
B^k\ge B^0(1-\eta_1)^{-k},\text{ and }\sigma^k=\sigma^0(1-\eta_2)^k
\end{equation}
for some $\eta_1>\eta_2>\rho$ and $B^0\ge\frac{6M^2(1-\eta_1)\sum_{m=1}^MG_m^2}{\sigma^0(\eta_1-\eta_2)(1-\eta_2)^D\delta}$, where $\delta>0$ is a given probability.
Then with probability at least $1-\delta$,
each worker updates at most once in every $D$ consecutive iterations.
In addition, CSGD will save communication in the sense of having less communication complexity, when
\begin{equation}\label{eq:save-cond}
D\ge2,~~~ K\ge6.
\end{equation}
\end{theorem}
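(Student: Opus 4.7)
The plan is to split the proof into two parts corresponding to the two claims: (i) with high probability each worker is censored for at least $D-1$ consecutive iterations after every upload, and (ii) the resulting per-worker upload rate, combined with the iteration complexities produced by Theorem~\ref{thm:rate}, beats SGD's full communication when $D\ge 2$ and $K\ge 6$.

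For (i), fix a worker $m$ and suppose it last uploaded at iteration $k_0$, so $\hat\nabla_m^{k-1}=\nabla_m^{k_0}$ for any $k\in\{k_0+1,\ldots,k_0+D-1\}$. I would decompose
\[
\nabla_m^{k}-\nabla_m^{k_0}
= \bigl(\nabla_m^{k}-\nabla F_m(\mathbf{x}^{k-1})\bigr)
+ \bigl(\nabla F_m(\mathbf{x}^{k-1})-\nabla F_m(\mathbf{x}^{k_0-1})\bigr)
+ \bigl(\nabla F_m(\mathbf{x}^{k_0-1})-\nabla_m^{k_0}\bigr),
\]
apply $\|a+b+c\|^2\le 3(\|a\|^2+\|b\|^2+\|c\|^2)$, and handle each piece separately. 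The middle deterministic piece is bounded via $L_m$-smoothness by $L_m\|\mathbf{x}^{k-1}-\mathbf{x}^{k_0-1}\|\le L_m\alpha\sum_{j=k_0}^{k-1}\|\hat\nabla^{j}\|$; a Cauchy--Schwarz step and the step-size bound $\alpha\le \frac{1}{6\sqrt{5\max_m L_m}\,MD}$ will reduce this squared quantity to $\frac{w}{M^2 D}\sum_{d=1}^{D}\|\hat\nabla^{k-d}\|^2$ with $w=\tfrac1{60}$, i.e., exactly the first piece of $\tau^k$. The two random pieces are mean-zero with variance at most $G_m^2/B^{k}$ and $G_m^2/B^{k_0}$, so I would introduce the events $A_{m,k}=\{\|\nabla_m^k-\nabla F_m(\mathbf{x}^{k-1})\|^2\le c\sigma^k/M^2\}$ (with $c$ a constant calibrated to absorb the factor of $3$ and the cross term), bound $\mathbb{P}(A_{m,k}^c)$ by Chebyshev, and union-bound over $m$ and $k$. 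Since $B^k\sigma^k\ge B^0\sigma^0\bigl(\tfrac{1-\eta_2}{1-\eta_1}\bigr)^k$ with ratio $>1$ by $\eta_1>\eta_2$, the resulting geometric sum is controlled, and the hypothesis $B^0\ge \frac{6M^2(1-\eta_1)\sum_m G_m^2}{\sigma^0(\eta_1-\eta_2)(1-\eta_2)^D\delta}$ is designed exactly to keep the total failure probability at $\delta$. On the good event, the noise pieces are jointly below $\sigma^k/M^2$ and the drift piece is below $\frac{w}{M^2 D}\sum_d\|\hat\nabla^{k-d}\|^2$, so $\|\nabla_m^k-\nabla_m^{k_0}\|^2\le \tau^k$, forcing another censoring at iteration $k$.

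For (ii), note that claim (i) implies that over $K$ CSGD iterations each worker uploads at most $\lceil K/D\rceil$ times, so the total upload count is at most $M\lceil K/D\rceil$. In contrast, SGD requires $M\bar K$ uploads, where $\bar K$ is SGD's $\nu$-iteration complexity. Using the Theorem~\ref{thm:rate} bounds $C_{\rm CSGD}(1-\rho)^K\le\nu$ and $C_{\rm SGD}(1-\rho)^{\bar K}\le\nu$, I would express $K-\bar K$ as a bounded constant depending only on $\log(C_{\rm CSGD}/C_{\rm SGD})$ and $\rho$. The communication-saving inequality $\lceil K/D\rceil\le \bar K$ then reduces, for $D\ge 2$, to $K\le 2\bar K-2$, i.e., $\bar K\ge K/2+1$; elementary algebra shows this is implied by $K\ge 6$ together with the additive gap between $K$ and $\bar K$ being absorbed by the slack of $K/2$.

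The most delicate step will be the probabilistic bound in claim (i): I must make the absorbing constant $c$ in $A_{m,k}$ play nicely with (a) the factor of $3$ coming from $\|a+b+c\|^2\le 3(\cdots)$, (b) the fact that the noise at $k_0$ enters the bound for all $k\in\{k_0+1,\ldots,k_0+D-1\}$ so the good events must hold jointly across the confidence window, and (c) the need for the summable union bound to match exactly the prescribed $B^0$ lower bound (hence the $(1-\eta_2)^D$ factor). In particular, care is needed so that the smoothness constant $L_m$ does not survive in the final comparison with $w/(M^2D)$; this is where the specific choice $w=1/60$ and the step-size restriction $\alpha\le\frac{1}{6\sqrt{5\max_m L_m}MD}$ are tuned to match. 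Once claim (i) is established on a single event of probability $\ge 1-\delta$, the communication count in (ii) is deterministic on that event and the ratio argument with Theorem~\ref{thm:rate} is routine.
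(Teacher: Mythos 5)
Your proposal follows essentially the same route as the paper's own proof: the same three-term decomposition with $\|a+b+c\|^2\le 3(\|a\|^2+\|b\|^2+\|c\|^2)$, the same Markov/Chebyshev-plus-union-bound construction of a single probability-$(1-\delta)$ good event (with the $\sigma^{k+D}$ shift producing the $(1-\eta_2)^D$ factor in the $B^0$ condition), the same contradiction against the threshold $\tau^k$ to force censoring throughout the confidence window, and the same upload count $\approx \lceil K/D\rceil$ versus SGD's iteration complexity $\bar K$ from Theorem \ref{thm:rate} to arrive at $D\ge2$, $K\ge6$. This matches the paper's argument in both structure and detail, so no further comparison is needed.
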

\begin{remark}
The batch-size rule in \eqref{batchsize} is commonly used in SGD algorithms with dynamic batch-size to guarantee convergence \cite{Bottou2018,Hao2019}. On the other hand, censoring introduces a geometrically convergent control-size $\sigma^k$, which leads to the same convergence rate as SGD, but provably improves communication-efficiency.
\end{remark}

In short, Theorem \ref{thm:sav} implies that with high probability, if we properly choose the parameters and run CSGD more than a given number of iterations, then the censoring strategy helps CSGD save communication. Intuitively, a larger $D$ cuts off more communications, while it slows down the linear rate of convergence since $\rho=\frac13\mu\alpha\le\frac1{2D}$.

Compared to LAG \cite{Tianyi2018} and LAPG \cite{Tianyi2018-2}, whose objective functions are not stochastic, our convergence results in Theorem \ref{thm:as} hold in the almost sure sense, and our communication reduction in Theorem \ref{thm:sav} is universal. That is to say, with a smaller step-size, the heterogeneity characteristic needed to establish communication reduction in LAG and LAPG is no longer a prerequisite in our work. Note that for both CSGD and SGD with dynamic batch-size \cite[Theorem 5.3]{bottou2016} \cite{Hao2019}, the magnitude of step-size does not affect the order of the overall number of stochastic gradient calculations to achieve the targeted accuracy. Specifically, since $\eta_1$ approaches $\rho$ as shown in Theorem \ref{thm:rate}, up to iteration $K$ (the $\nu$-iteration complexity of CSGD), the number of needed samples is
\begin{align}
\sum_{k=1}^K B^k&=\sum_{k=1}^K B^0(1-\eta_1)^{-k}\notag\\
&\approx\sum_{k=1}^K B^0(1-\rho)^{-k}\notag\\&=O((1-\rho)^{-K})=O(\nu^{-1}),
\end{align}
where the order is regardless of the magnitude of $\alpha$. 
Therefore, different from using an optimal (possibly large) step-size in existing algorithms like LAG and LAPG, it is reasonable to set the step-sizes in SGD and CSGD the same small value, which leads to our universal communication reduction result.

\begin{remark}
For simplicity, in the proof, we set the constants in Lemma \ref{lem:descent} as $\bar\epsilon=\epsilon=\frac12$. Keeping $\bar\epsilon=\epsilon$ gives the same linear rate of convergence. Yet, with different values, we may achieve better results but it is not the main focus here.
\end{remark}

\subsection{Nonconvex case}
While CSGD can achieve linear convergence rate under the Polyak-\L ojasiewicz condition, many important learning problems such as deep neural network training do not satisfy such a condition.
Without Assumption \ref{ass:convex}, we establish more general results that also work for a large family of nonconvex functions.
\begin{theorem}[Nonconvex case]\label{thm:nonconvex}
Under Assumption \ref{ass:local} and the same parameter settings in Theorem \ref{thm:sav},
then with probability at least $1-\delta$, we have
\begin{equation}\label{eq:min-nonconvex}
\min_{1\le k\le K}\|\nabla F(\mathbf{x}^k)\|^2=o\big(\frac1K\big),\ \min_{1\le k\le K}\|\hat\nabla^k\|^2=o\big(\frac1K\big),
\end{equation}
and each worker uploads at most once in $D$ consecutive iterations.
As a consequence, if we evaluate $\nu$-iteration complexity by $\min_{1\le k\le K}\|\nabla F(\mathbf{x}^k)\|^2$ and correspondingly define $\nu$-communication complexity as the total number of communications up to its iteration complexity time, then CSGD will save communication in the sense of having less communication complexity as long as
\begin{equation}\label{eq:save-cond2}
D\ge 10,\quad K\ge448.
\end{equation}
\end{theorem}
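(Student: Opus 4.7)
My plan decomposes the proof into three parts: a Lyapunov descent inequality that does \emph{not} invoke the Polyak-\L ojasiewicz condition, a tail-of-series pigeonhole converting summability into $o(1/K)$, and the high-probability censoring analysis borrowed from the proof of Theorem \ref{thm:sav} together with the final communication accounting.

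For the descent, I would start from the CSGD branch of Lemma \ref{lem:descent} with $\epsilon=\tfrac12$ as in Remark 2. Combined with the Lyapunov function $V^k$ from \eqref{Lyapunov} and the threshold \eqref{threshold}, the historic $\|\hat\nabla^{k-d}\|^2$ inside $\tau^k$ are absorbed by the sliding window of length $D$ in $V^k$, while the stochastic error $\|\nabla F(\mathbf{x}^{k-1})-\nabla^k\|^2$ has conditional expectation at most $G^2/B^k$ under Assumption \ref{ass:global}. The parameter choices inherited from Theorem \ref{thm:sav} ($w=1/60$, $D\ge 10$, and $\alpha$ small enough) keep the coefficients of both $\|\nabla F(\mathbf{x}^{k-1})\|^2$ and $\|\hat\nabla^k\|^2$ strictly positive, giving
\begin{align*}
\mathbb{E}V^k - \mathbb{E}V^{k-1} \le -c_1\alpha\,\mathbb{E}\|\nabla F(\mathbf{x}^{k-1})\|^2 - c_2\alpha\,\mathbb{E}\|\hat\nabla^k\|^2 + c_3\alpha\sigma^k + \tfrac{c_4\alpha}{B^k}.
\end{align*}
Telescoping $k=1,\dots,K$, using $V^K\ge 0$ (since $F\ge F^*$), and exploiting the geometric summability of $\sigma^k$ and $1/B^k$ from \eqref{batchsize} yields $\sum_{k=1}^{\infty}\mathbb{E}\|\nabla F(\mathbf{x}^k)\|^2<\infty$ and similarly for $\|\hat\nabla^k\|^2$; a Robbins--Siegmund argument upgrades this to almost-sure summability on each sample path. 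To turn summability into the $o(1/K)$ claim, I apply the elementary pigeonhole
\begin{align*}
\min_{\lceil K/2\rceil<k\le K}a_k \le \tfrac{2}{K}\sum_{k=\lceil K/2\rceil+1}^K a_k,
\end{align*}
whose right-hand side is $o(1/K)$ because the tail of any convergent nonnegative series tends to zero.

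For the ``each worker uploads at most once in $D$ consecutive iterations'' claim, I would replay the argument underlying Theorem \ref{thm:sav}. If worker $m$ last uploaded at iteration $k$, then for each $d=1,\dots,D-1$ the staleness decomposes as
\begin{align*}
\|\nabla_m^{k+d}-\hat\nabla_m^{k+d-1}\|^2 \le 3\|\nabla_m^{k+d}-\nabla F_m(\mathbf{x}^{k+d-1})\|^2 + 3L_m^2\alpha^2\Bigl\|\sum_{j=k}^{k+d-1}\hat\nabla^{j}\Bigr\|^2 + 3\|\nabla F_m(\mathbf{x}^{k-1})-\nabla_m^k\|^2,
\end{align*}
by smoothness from Assumption \ref{ass:local}. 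The deterministic middle piece is majorized by the historic-gradient portion of $\tau^{k+d}$ thanks to the step-size restriction $\alpha\le 1/(6\sqrt{5\max_m L_m}MD)$; the two stochastic pieces are majorized by $\sigma^{k+d}$ via Markov's inequality, and the batch-size floor prescribed in Theorem \ref{thm:sav} is calibrated so that a union bound over $k\le K$ and $m\le M$ bounds the total failure probability by $\delta$. On this good event, CSGD performs at most $MK/D$ uploads up to iteration $K$, against $MK$ for SGD. To convert this into \eqref{eq:save-cond2}, I plug the explicit constants from the telescoping bound into the two $\nu$-iteration complexities and require the CSGD communication count to be strictly smaller; the looser thresholds $D\ge 10,\,K\ge 448$ compared to the Polyak-\L ojasiewicz case ($D\ge 2,\,K\ge 6$) arise because, without linear contraction, the complexity ratio is governed by absolute constants rather than a geometric rate, and those constants must be overpowered by $D$ and $K$.

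The main obstacle is Step 3: arranging the smoothness and stochastic pieces to fit inside the two distinct summands of $\tau^{k+d}$ (the historic-gradient part and the $\sigma^{k+d}$ part), preserving the union-bound failure at $\delta$, and tracking constants precisely enough that the explicit thresholds $D\ge 10$ and $K\ge 448$ fall out. Step 1 is routine once one recognizes that the Lyapunov window length matches the definition of $\tau^k$, and Step 2 is a standard tail-of-series argument.
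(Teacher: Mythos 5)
Your Steps 1--2 are sound and would even give the $o(1/K)$ claims almost surely: telescoping the conditional-expectation descent (the paper's \eqref{eq:Vbound1}, which survives without the Polyak-\L ojasiewicz condition) plus Tonelli gives pathwise summability of $\|\nabla F(\mathbf{x}^{k-1})\|^2$ and $\|\hat\nabla^{k}\|^2$, and your pigeonhole is a clean (slightly more direct) variant of the paper's contradiction argument. The censoring part is also correctly sketched and matches the paper's reuse of \eqref{eq:contra}. The genuine gap is exactly where you flagged it, in Step 3, and it is not merely a matter of ``tracking constants'': with your expectation-based telescoping, the CSGD constant is of the form $V^0+c_3\alpha\sum_k\sigma^k+c_4\alpha\sum_k G^2/B^k$ while the SGD constant is $V^0+c_4\alpha\sum_k G^2/B^k$, and the ratio of the two has \emph{no universal lower bound}: taking $B^0$ far above the floor in \eqref{batchsize} (and $V^0$ small) makes the $G^2/B^k$ terms negligible against $\sum_k\sigma^k$, driving the ratio to zero. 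So the thresholds $D\ge10$, $K\ge448$ cannot ``fall out'' of your bound; no finite $D$ and $K$ work uniformly over problem instances along this route.

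The missing idea is the paper's switch from expectation bounds to \emph{pathwise} bounds on the very high-probability event you already construct for censoring. On the event where $\|\nabla_m^k-\nabla F_m(\mathbf{x}^{k-1})\|^2\le\sigma^{k+D}/(6M^2)$ for all $k,m$ (probability $\ge1-\delta$ via Markov, the union bound, and the batch-size floor, as in \eqref{eq:delta}), one replaces the sampling-error term $G^2/B^k$ in the descent recursion by the deterministic quantity $\sigma^{k+D}/6$ --- for \emph{both} CSGD and SGD. Then every accumulated error term is a multiple of $\sigma^0$, yielding
\begin{equation*}
N_{\rm CSGD}=V^0+\tfrac{67\alpha(1-\eta_2)\sigma^0}{18\eta_2},
\qquad
N_{\rm SGD}=V^0+\tfrac{7\alpha(1-\eta_2)\sigma^0}{18\eta_2},
\end{equation*}
whose ratio satisfies $N_{\rm SGD}/N_{\rm CSGD}\ge 7/67$ \emph{universally} (the $67/18=60/18+7/18$ split reflects that CSGD pays the extra threshold term $\tfrac{10}{3}\alpha\sigma^k$ on top of the common sampling-error term $\tfrac{7}{18}\alpha\sigma^{k+D}$). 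This universal ratio is precisely what produces $D\ge\lceil 67/7\rceil=10$ and $K\ge 1+\lceil 2/(\tfrac{7}{67}-\tfrac1{10})\rceil=448$ from the sufficient condition $\lceil\tfrac{K-1}{D}\rceil+1\le\bar K$. In short: reuse your good event not only for the censoring claim but also inside the Lyapunov recursion, so that the batch-size floor converts the variance $G^2$ into a $\sigma^0$-scaled quantity before any complexity comparison is made.
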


\section{Numerical experiments}
To demonstrate the merits of our proposed CSGD, especially the two-part design of the censoring threshold,
we conduct experiments on four different problems, least squares on a synthetic dataset,
softmax regression on the MNIST dataset \cite{lecun1998mnist} and
logistic regression on the Covertype dataset \cite{covtype}, and deep neural network training on the CIFAR-10 dataset \cite{cifar10}. The source codes are available at \url{https://github.com/Weiyu-USTC/CSGD}. All experiments are conducted using Python 3.7.4  on a computer with Intel i5 CPU @ 2.3GHz. We simulate one server and ten workers. To benchmark CSGD, we consider the following approaches.

\noindent
$\triangleright$ \textbf{CSGD}: our proposed method with update \eqref{CSGD}.

\noindent
$\triangleright$ \textbf{LAG-S}: directly applying the LAG \cite{Tianyi2018} censoring condition to the stochastic problem, which can be viewed as CSGD with zero control-size.

\noindent
$\triangleright$ \textbf{SGD}: update \eqref{SGD}, which can be viewed as CSGD with censoring threshold $0$.
\vspace{0.05cm}

In practice, when the batch-size is larger than the number of samples (denoted as $\bar{B}$), a worker can get all the samples, thus there is no more need of stochastic sampling and averaging.
Therefore in the experiments, unless otherwise specified, the batch-size and censoring threshold are calculated via
\begin{equation}
\hspace{0em}\begin{cases}
    B^k=\min\left\{ \lceil B^0(1-\eta_1)^{-k}\rceil, \bar{B}\right\}, \\
    \tau^k=\frac{w}{M^2D\alpha^2}\sum_{d=1}^D\|\mathbf{x}^{k-d}-\mathbf{x}^{k-d-1}\|^2+\frac{\sigma^0}{M^2}(1-\eta_2)^k,
\end{cases} \label{eq:tauk-exp}
\end{equation}
where $B^0,\eta_1,\eta_2,\sigma^0,D$ are set the same for three algorithms, while $w,\sigma^0$ are parameters depending on which method we use. Specifically,
$w$ in CSGD and LAG-S is set as $\frac1{60}$ according to our theoretical analysis, while $w=0$ in SGD. The initial control-size $\sigma^0$ is manually tuned to give proper performance in the first few iterations of CSGD, and is $0$ in LAG-S and SGD.
In all the experiments, we choose $D=10$, since it works for both Polyak-\L ojasiewicz and nonconvex cases in the theorems.

We tune the parameters by the following principles.
First choose the step-size $\alpha$ and the batch-size parameters (i.e., $B^0$ and $\eta_1$) that work well for SGD, then keep them the same in CSGD and LAG-S. Second, tune the control-size parameters (i.e., $\sigma^0, \eta_2$) to reach a considerable communication-saving with tolerable difference in the convergence with respect to the number of iterations.

\vspace{0.1cm}
{\bf Least squares.}
We first test on the least squares problem, given by
\begin{equation}
    f_m(\mathbf{x};\xi_m)=\frac12\|( \xi_m^{(1)})^T (\mathbf{x}-\mathbf{x}^*)+\xi_m^{(2)}\|^2,
~~~ \mathbf{x}\in\mathbb{R}^{10}.\label{obj:ls}
\end{equation}
Therein, entries of $ \xi_m^{(1)}\in\mathbb{R}^{10}$ are randomly chosen from the standard Gaussian,
entries of $\mathbf{x}^*$ are uniformly sampled from $[-2,2]$, and
$\xi_m^{(2)}\in\mathbb{R}$ is a Gaussian noise with distribution $\mathcal{N}(0,0.01^2)$.
All values are generated independently. The parameters are set as $\alpha=0.02$, $B^0=1$, $(1-\eta_1)^{-1}=1.1$, $\sigma^0=0.1$, and $1-\eta_2=0.91$, which guarantee that the condition $\eta_1>\eta_2$ in Theorem \ref{thm:sav} is satisfied.
From Figure \ref{fig:ls}, we observe that CSGD significantly saves the communication but converges to the same loss within a slightly more number of iterations.

We also use an intuitive explanation in Figure \ref{fig:ls_comm} to showcase the effectiveness of CSGD on censoring gradient uploads.
One blue stick refers to one gradient upload for the corresponding worker at that iteration.
The first $30$ iterations in this experiment adjust the initial variable to the point where newly calculated gradients become less informative, and after that communication events happen sparsely. Note that the uploads in Figure \ref{fig:ls_comm} are not as sparse as our theorems suggest --- no more than one communication event happens in $D$ consecutive iterations,
since we set $B^0=1$ instead of a sufficiently large number.
Nevertheless, the design of the batch-size and the control-size sparsifies the communication, and results in the significant reduction of communication cost after the first $300$ uploads as in Figure \ref{fig:ls}.
Besides, a well-designed control-size also plays an important role; without the control term, the curve of LAG-S highly overlaps with that of SGD, while CSGD outperforms the other two methods with the consideration of communication-efficiency.

\begin{figure}[h!]
\vspace{-0.2cm}
\centering
\includegraphics[height=5cm,width=7cm]{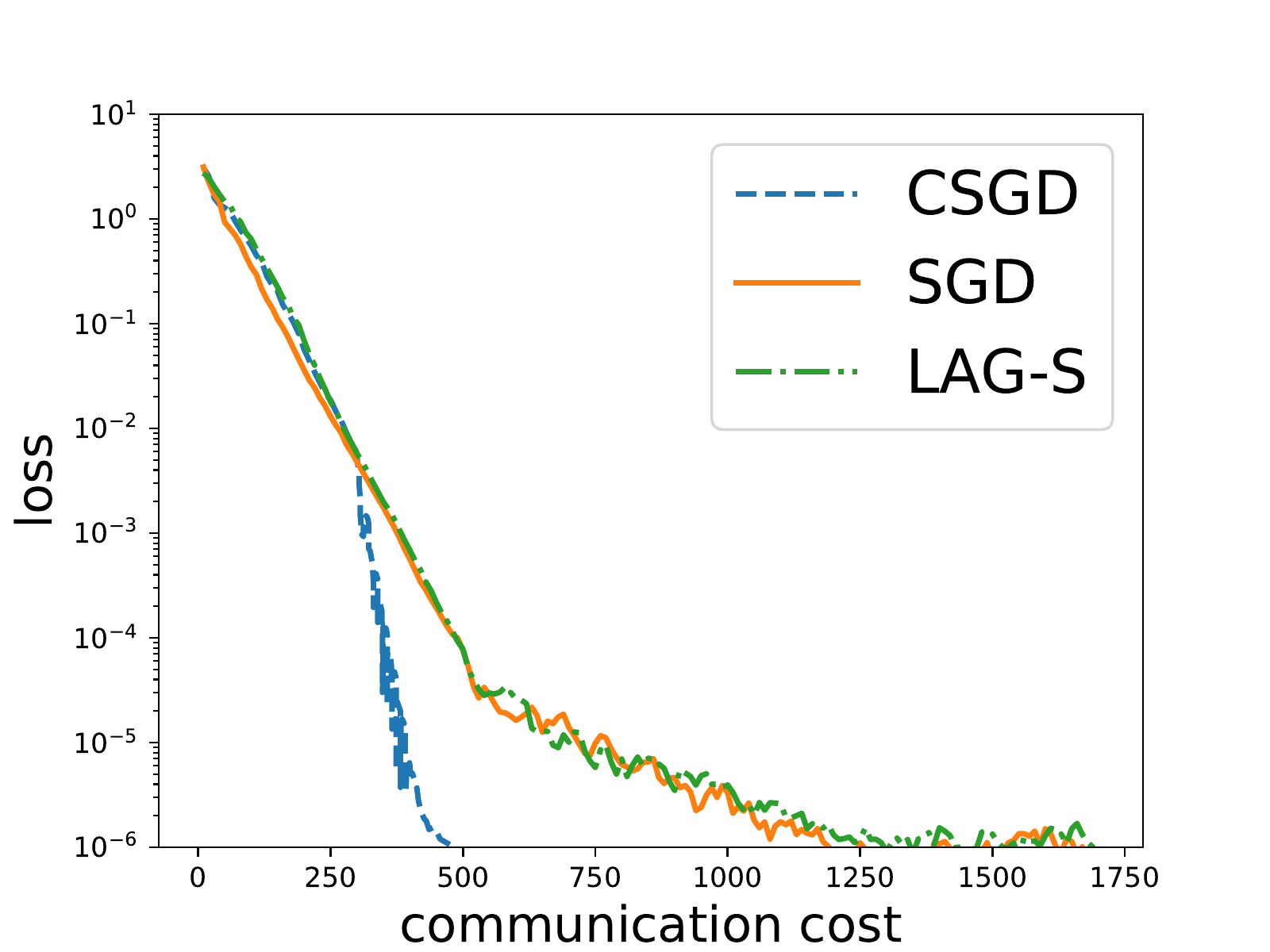}
\includegraphics[height=5cm,width=7cm]{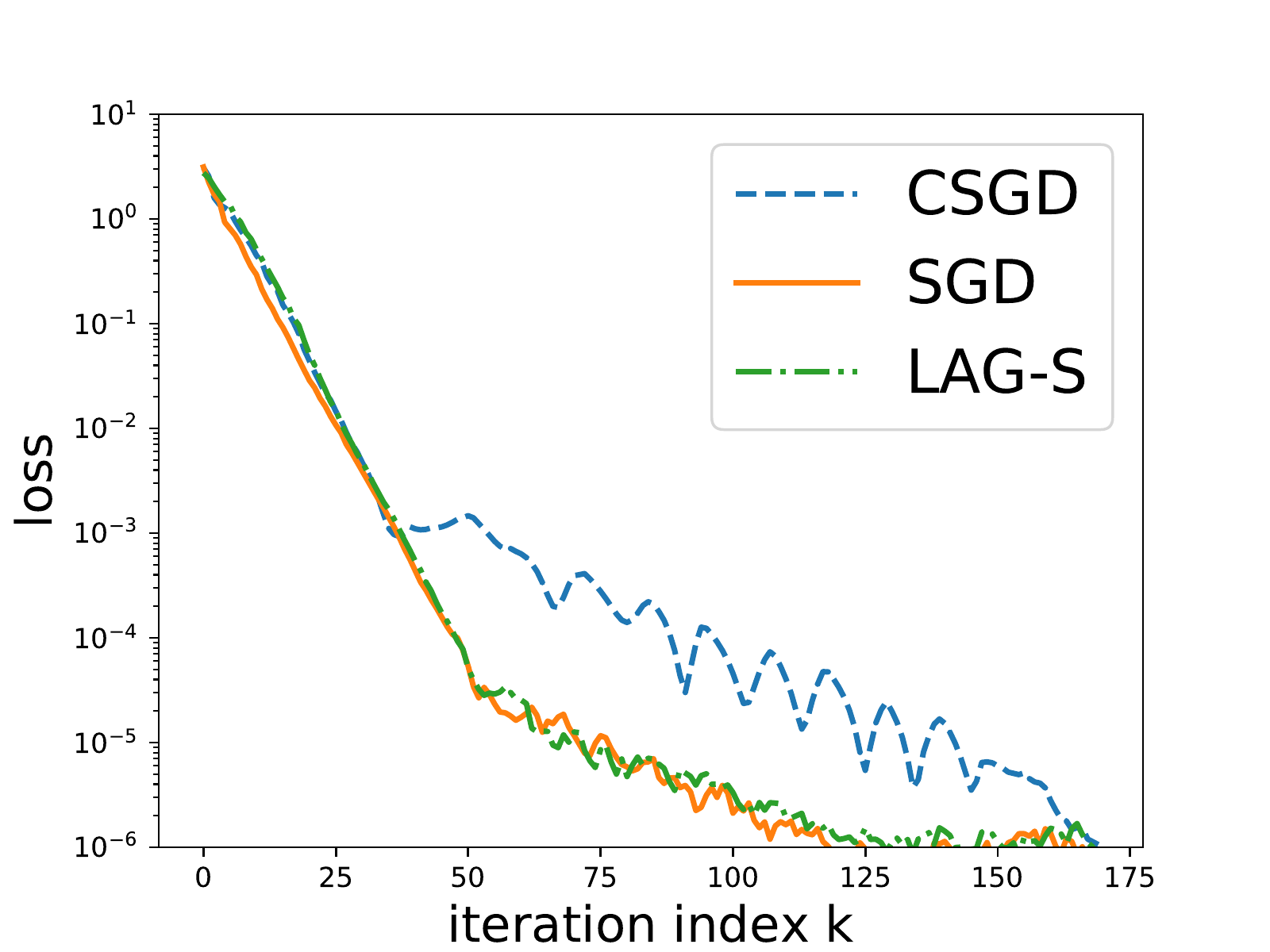}
\caption{Performance of three algorithms on the least squares problem.
Left: the loss versus communication cost.
Right: the loss versus number of iterations. }\label{fig:ls}
\vspace{-0.2cm}
\end{figure}

\begin{figure}[h!]
\centering
  \includegraphics[height=4.8cm]{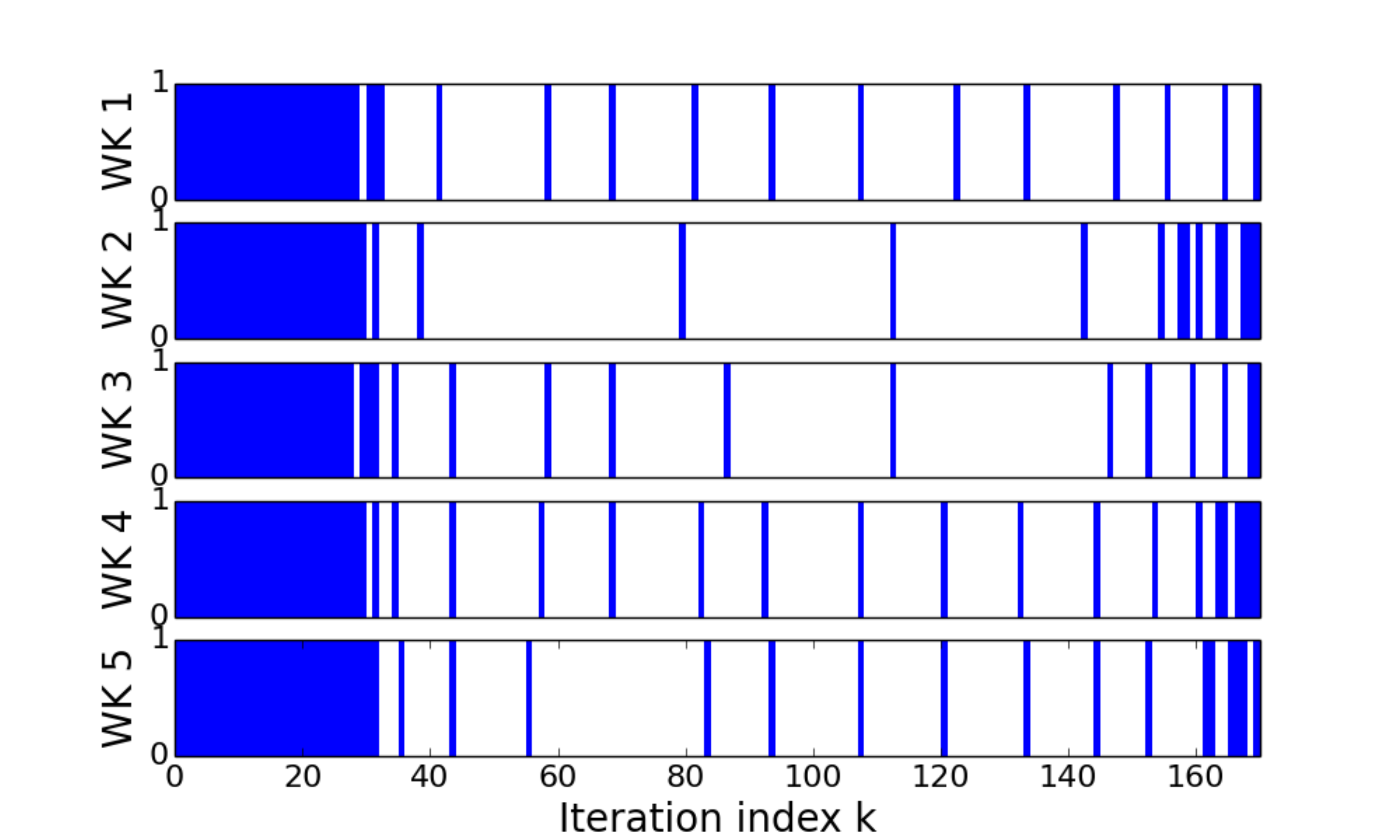}
  \caption{Uploading events of workers 1-5 over $170$ iterations in least squares. Each stick is an upload.}
\label{fig:ls_comm}
\vspace{-0.2cm}
\end{figure}

\vspace{0.1cm}
{\bf Softmax regression.}
Second, we conduct experiments on the MNIST dataset \cite{lecun1998mnist}, which has 60k training samples, and we use $2$-norm regularized
softmax regression with regularization parameter $0.0005$. The training samples are randomly and evenly assigned to the workers. The parameters are set as $\alpha=0.01$,  $B^0=1$, $(1-\eta_1)^{-1}=1.05$, $\sigma^0=15,1-\eta_2=0.96$.
From Figure \ref{fig:mnist}, the reduction of communication cost in CSGD can be easily observed, with a slightly slower convergence with respect to the number of iterations, which is consistent with the performance in the previous experiment.

\begin{figure}[h!]
\centering
\includegraphics[height=5cm,width=7cm]{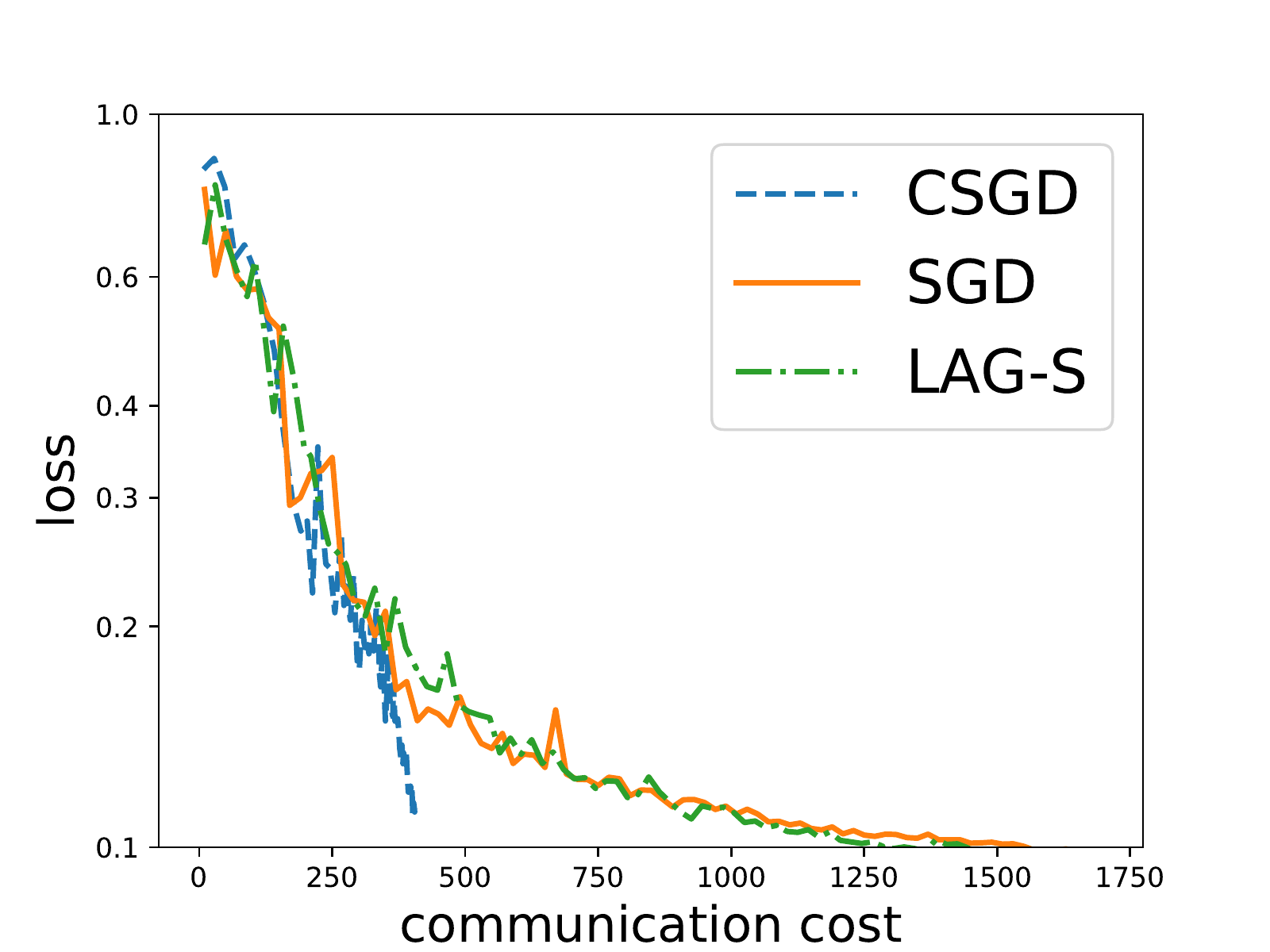}
\includegraphics[height=5cm,width=7cm]{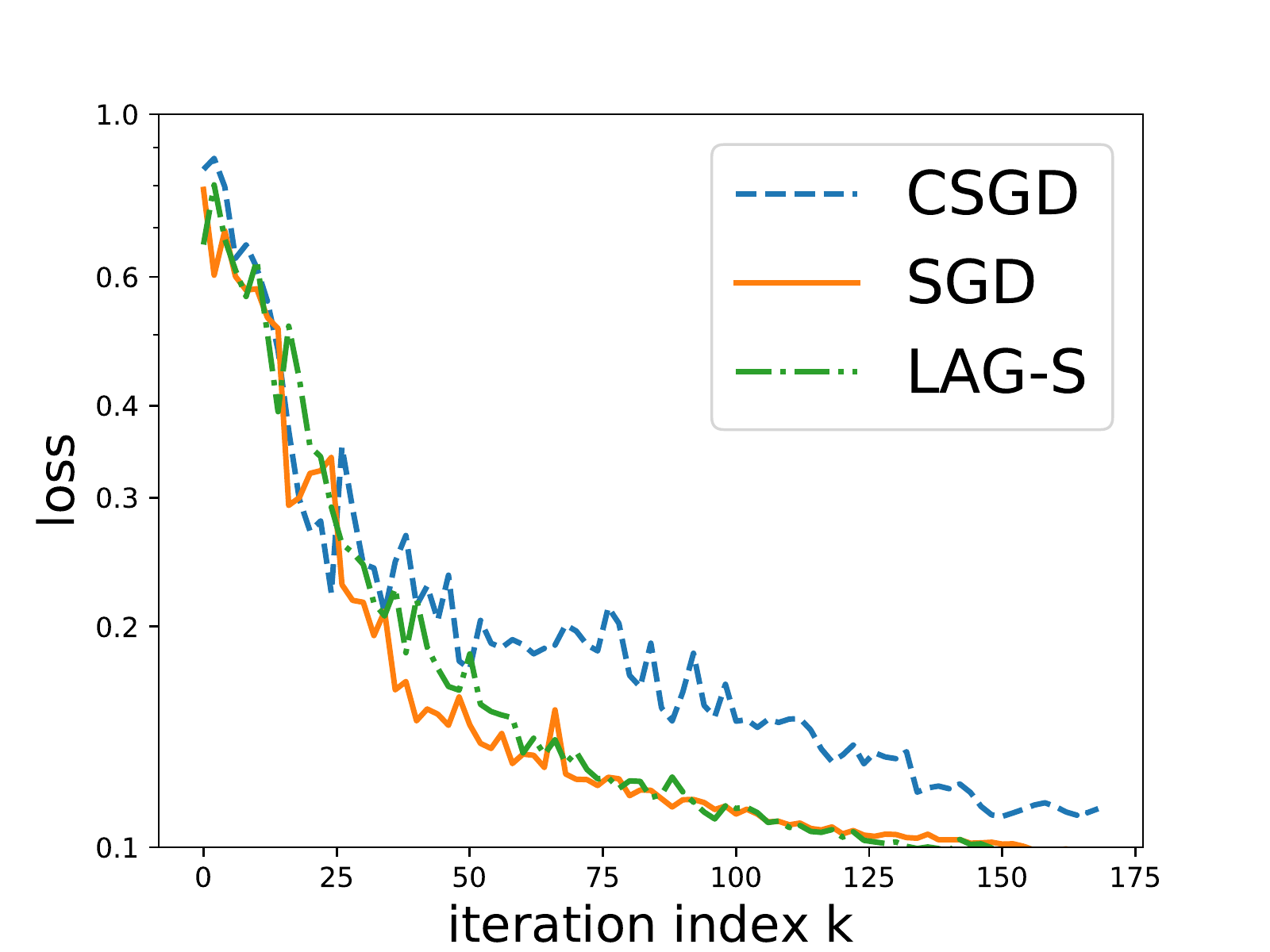}
 \vspace{-0.2cm}
\caption{Performance of three compared algorithms on the softmax regression problem.
Left: the loss versus the communication cost.
Right: the loss versus the number of iterations.  }
\label{fig:mnist}
\end{figure}

\begin{figure}[h!]
\vspace{-0.4cm}
\centering
\includegraphics[height=5cm,width=7cm]{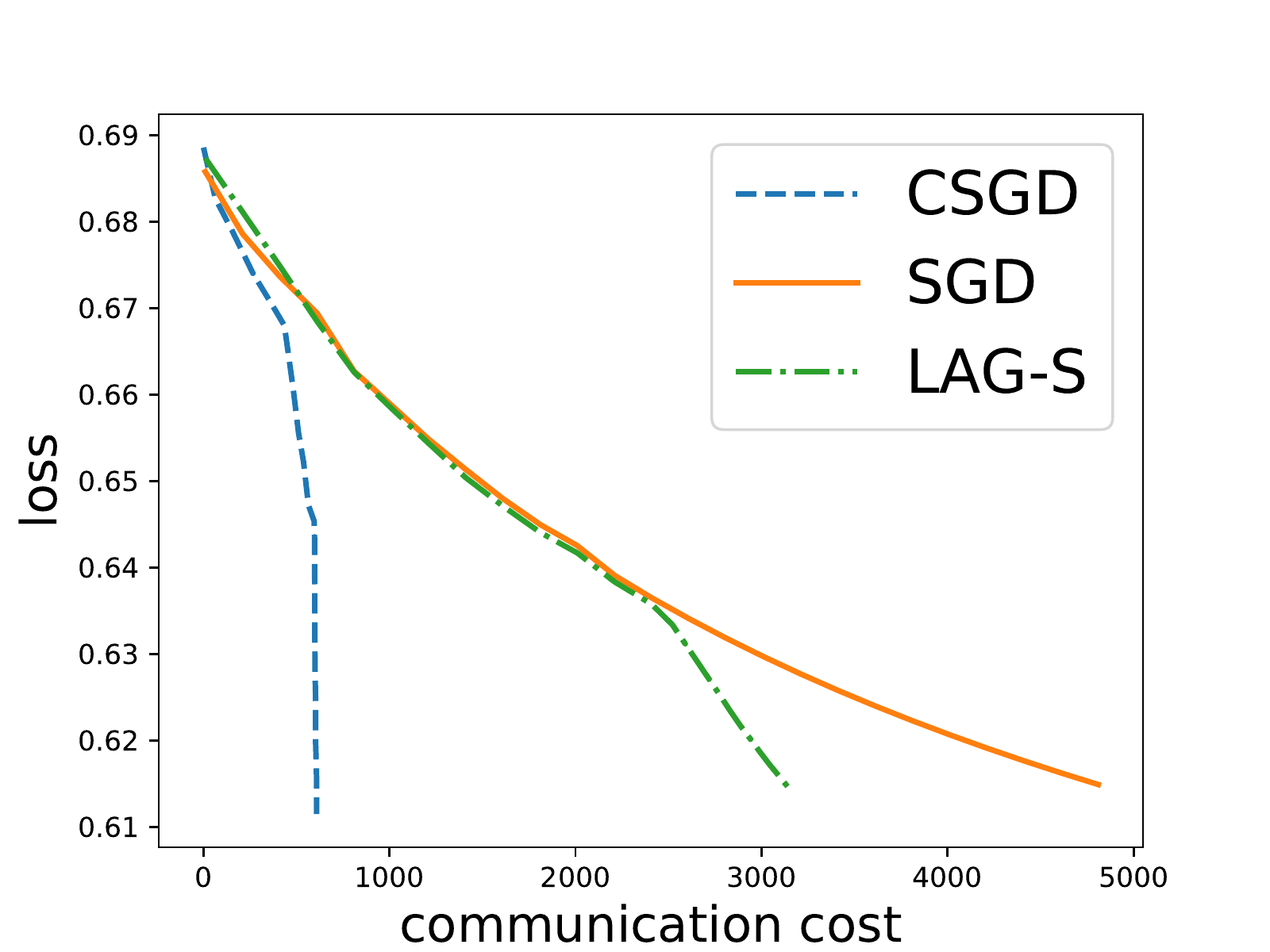}
\includegraphics[height=5cm,width=7cm]{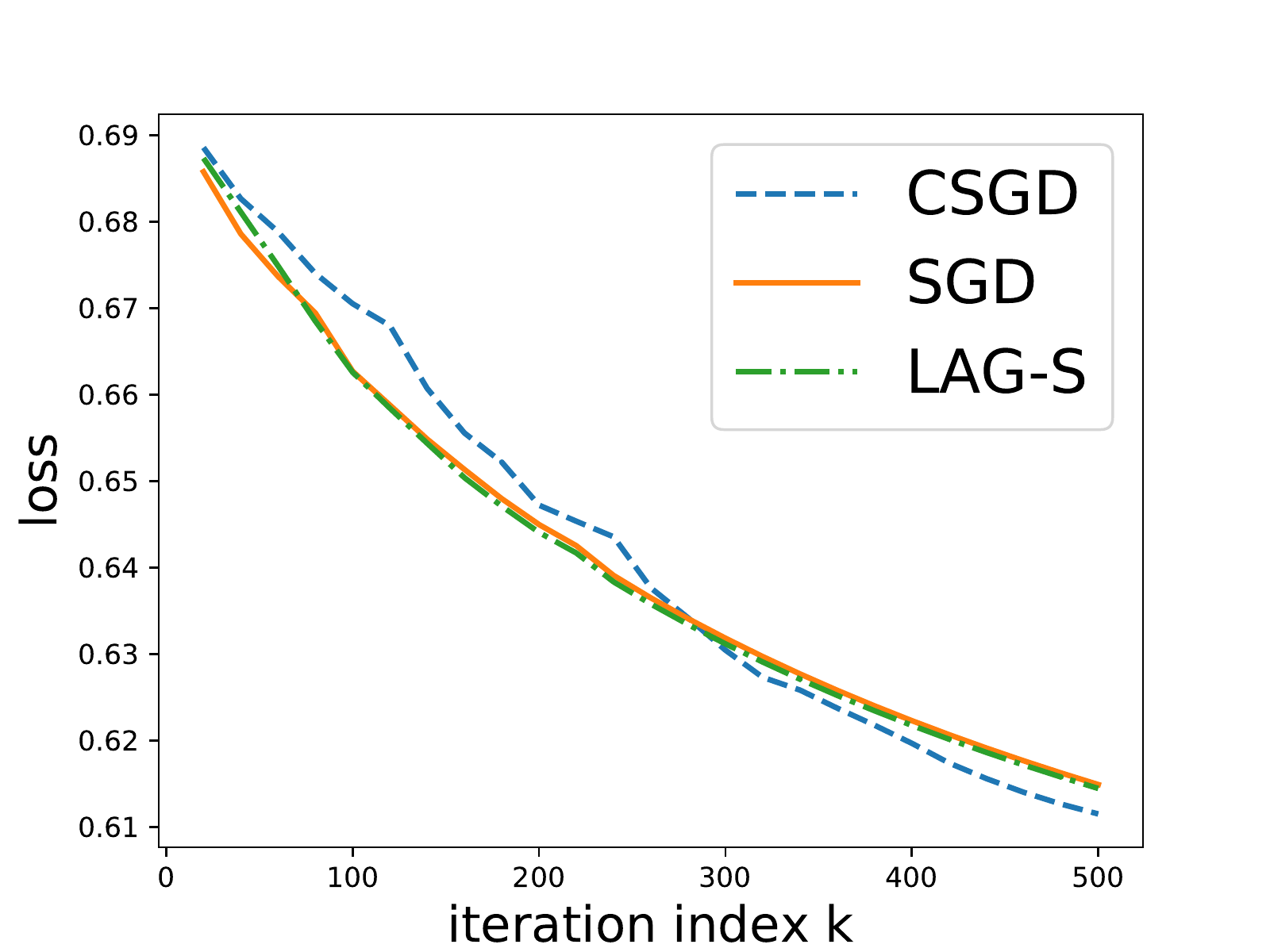}
 \vspace{-0.2cm}
\caption{Performance of the three compared algorithms on the logistic regression problem.
Left: the loss versus the communication cost.
Right: the loss versus the iterations.}
\vspace{-0.2cm}
\label{fig:Covertype}
\end{figure}

\vspace{0.1cm}
{\bf Logistic regression.}
Third, we conduct experiments on the Covertype dataset \cite{covtype} that has around $581$k training samples, and we use
$2$-norm regularized
logistic regression with regularization parameter $0.0005$. The training samples are randomly and evenly assigned to the workers, and the parameters are set as $\alpha=0.1$, $B^0=1$, $(1-\eta_1)^{-1}=1.05$, $\sigma^0=150$, $1-\eta_2=0.99$. Observed from Figure \ref{fig:Covertype}, though the three algorithms reach the same loss within similar numbers of iterations, CSGD and LAG-S require less communication cost. Compared to LAG-S that saves about $1/3$ communication after running for a long time, CSGD successfully saves communication at an initial stage, and achieves the overall significant communication savings.

\begin{figure}[h!]
	\vspace{-0.4cm}
	\centering
	\includegraphics[height=5cm,width=7cm]{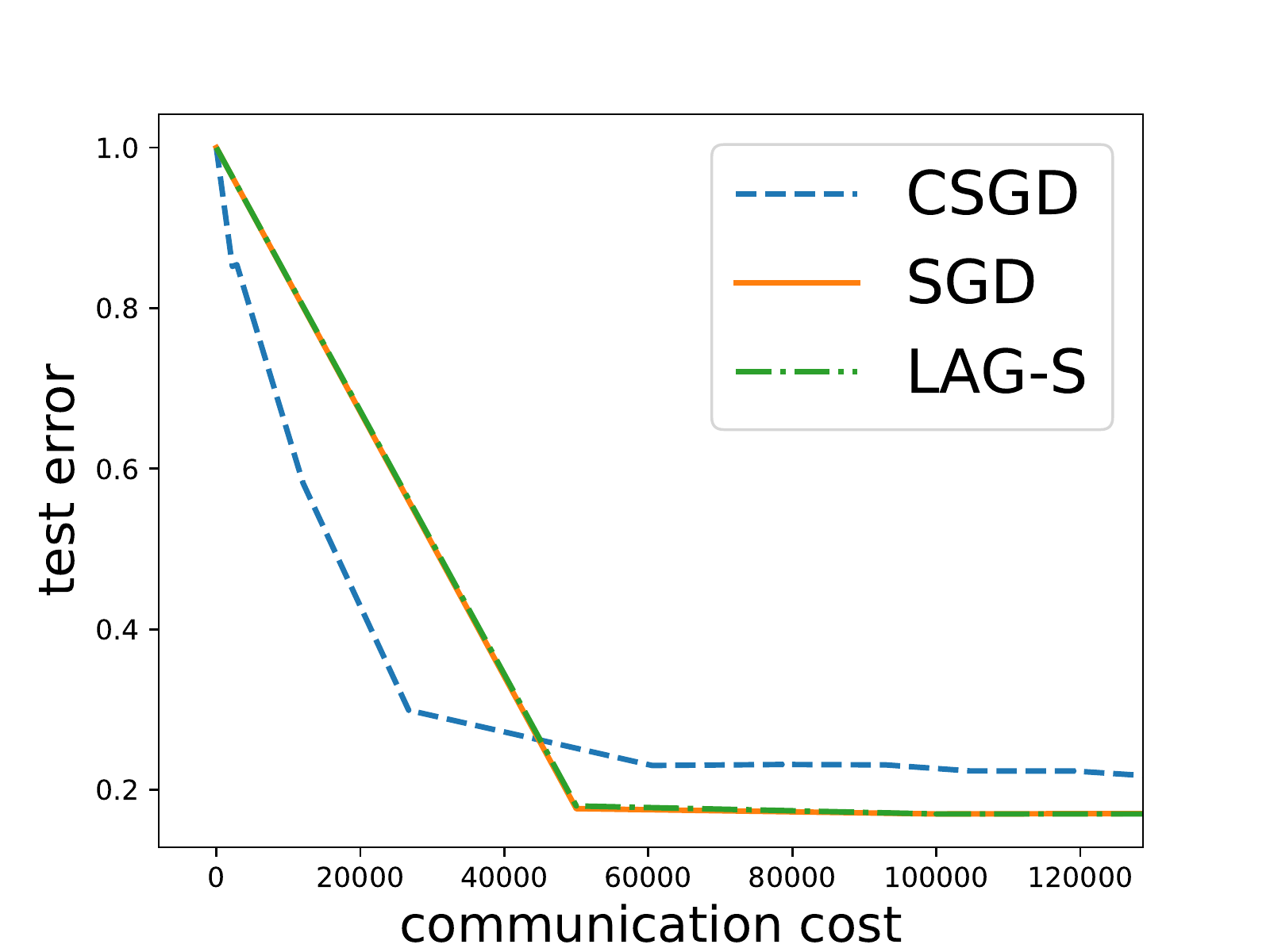}
	\includegraphics[height=5cm,width=7cm]{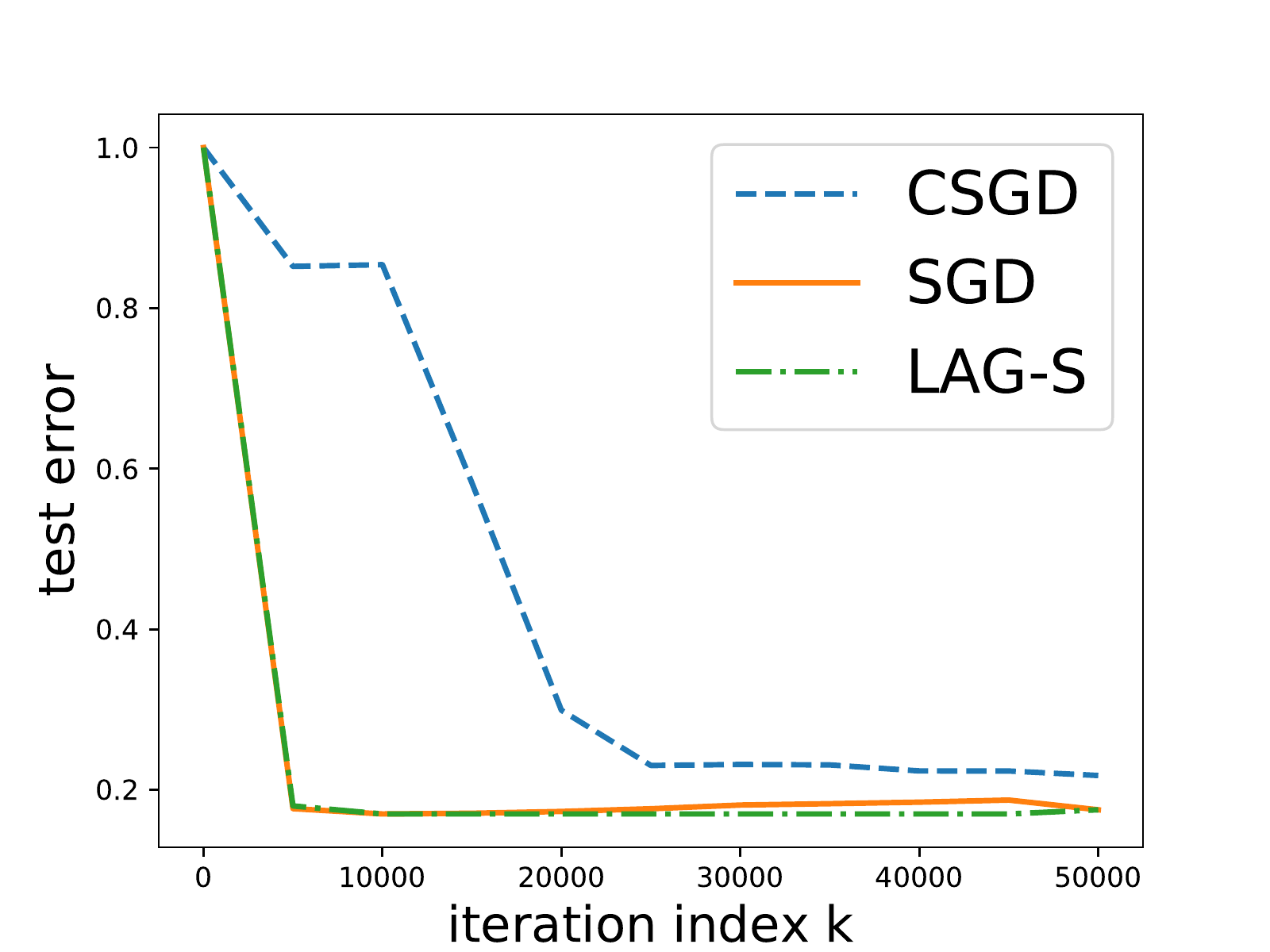}
	\vspace{-0.2cm}
	\caption{Performance of the three compared algorithms on the deep neural network training problem.
		Left: the test error versus the communication cost.
		Right: the test error versus the iterations.}
	\vspace{-0.2cm}
	\label{fig:cifar10}
\end{figure}

\vspace{0.1cm}
{\bf Deep neural network training.}
Finally, we train a ResNet-18 \cite{Kaiming2018} with regularization coefficient $0.0001$ on the CIFAR-10 dataset \cite{cifar10} that has $50$k samples for training and $10$k for testing. To avoid the batch-size and the control-size changing too rapidly, we replace the terms $B^0(1-\eta_1)^{-k}$ and $(1-\eta_2)^k$ in \eqref{eq:tauk-exp} by $B^0(1-\eta_1)^{-\lceil k/500 \rceil}$ and $(1-\eta_2)^{\lceil k/500 \rceil}$, respectively. The parameters are set as $\alpha=0.3$, $B^0=16$, $(1-\eta_1)^{-1}=1.05$, $\bar{B}=64$, $\sigma^0=1720$, and $1-\eta_2=0.96$. As illustrated by Figure \ref{fig:cifar10}, CSGD requires less communication cost to reach the same test accuracy. The curve of LAG-S almost overlaps with that of SGD, due to the lack of control-size that helps censoring the stochastic gradient noise.

\section{Conclusions and discussions}
We focused on the problem of communication-efficient distributed machine learning in this paper.
Targeting higher communication-efficiency,
we developed a new stochastic distributed optimization algorithm
abbreviated as CSGD.
By introducing a communication-censoring protocol,
CSGD significantly reduces the number of gradient uploads,
while it only sacrifices slightly the needed number of iterations.
It has been rigorously established that our proposed CSGD method
achieves the same order of convergence rate as the uncensored SGD,
while CSGD guarantees fewer number of gradient uploads
if a sufficient number of historic variables are utilized.
Numerical tests demonstrated the communication-saving merit of CSGD.




\appendices
\section{Supporting lemma}
\begin{lemma}[Quasi-martingale convergence]\label{lem:mar}
Let $(X_n)_{n\ge0}$ be a nonnegative sequence adapted to a filtration $(\mathcal{F}_n)_{n\ge0}$.
Denote $D_n=X_n-X_{n-1}$ for any $n\ge1$.
If $$\sum_{n=1}^\infty\mathbb{E}\big[D_n I_{\mathbb{E}[D_n|\mathcal{F}_{n-1}]>0}\big]
=\sum_{n=1}^\infty\mathbb{E}\big[\min\big\{\mathbb{E}[D_n|\mathcal{F}_{n-1}],0\big\}\big]<\infty,$$
where $I_{\mathbb{E}[D_n|\mathcal{F}_{n-1}]>0}$ is the indicator function of the event $\{\mathbb{E}[D_n|\mathcal{F}_{n-1}]>0\}$,
then there exists a random variable $X_\infty$, such that when $n\rightarrow\infty$,
$$X_n\overset{a.s.}\rightarrow X_\infty\ge0.$$
\end{lemma}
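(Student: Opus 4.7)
The plan is to reduce the claim to Doob's supermartingale convergence theorem by a standard decomposition of the predictable compensator into its positive and negative parts. First I would form the compensator $A_n := \sum_{k=1}^n \mathbb{E}[D_k \mid \mathcal{F}_{k-1}]$ (with $A_0 = 0$), so that $M_n := X_n - A_n$ is a martingale. I would then split $A_n = A_n^+ - A_n^-$, where
\[
A_n^+ := \sum_{k=1}^n \bigl(\mathbb{E}[D_k \mid \mathcal{F}_{k-1}]\bigr)^+, \qquad A_n^- := \sum_{k=1}^n \bigl(\mathbb{E}[D_k \mid \mathcal{F}_{k-1}]\bigr)^-,
\]
both of which are predictable and nondecreasing.

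The tower property gives $\mathbb{E}[A_n^+] = \sum_{k=1}^n \mathbb{E}\bigl[D_k\, I_{\mathbb{E}[D_k|\mathcal{F}_{k-1}]>0}\bigr]$, which is bounded in $n$ by the hypothesis. Monotone convergence then delivers an a.s.\ limit $A_\infty^+$ with $\mathbb{E}[A_\infty^+] < \infty$. The crucial construction is to set $Z_n := X_n - A_n^+ = M_n - A_n^-$. Since $A_n^-$ is predictable and nondecreasing, $\mathbb{E}[Z_n - Z_{n-1} \mid \mathcal{F}_{n-1}] = -(\mathbb{E}[D_n\mid\mathcal{F}_{n-1}])^- \le 0$, so $Z_n$ is a supermartingale. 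Nonnegativity of $X_n$ yields $Z_n \ge -A_n^+ \ge -A_\infty^+$, hence $\sup_n \mathbb{E}[Z_n^-] \le \mathbb{E}[A_\infty^+] < \infty$.

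By Doob's supermartingale convergence theorem, $Z_n$ converges almost surely to some finite $Z_\infty$. Combining with $A_n^+ \to A_\infty^+$ a.s.\ and the identity $X_n = Z_n + A_n^+$, I conclude that $X_n \to X_\infty := Z_\infty + A_\infty^+$ a.s., and the limit inherits $X_\infty \ge 0$ from $X_n \ge 0$.

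I do not foresee a substantive obstacle: the argument is essentially Fisk's classical quasi-martingale decomposition, and each step is a direct consequence of standard martingale tools. The only point that requires care is verifying the implicit integrability needed to define the conditional expectations $\mathbb{E}[D_k \mid \mathcal{F}_{k-1}]$ and to invoke the supermartingale convergence theorem; this is implicit in the finiteness of the hypothesized sum, since the terms $\mathbb{E}[D_k\,I_{\mathbb{E}[D_k|\mathcal{F}_{k-1}]>0}]$ must be well defined, and otherwise can be handled by a routine truncation and monotone-class argument that does not alter any step above.
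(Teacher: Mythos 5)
Your proof is correct, but it takes a genuinely different route from the paper's. The paper splits the \emph{raw increments} by the sign of the predictable drift: writing $Y_{n-1}=\mathbb{E}[D_n\mid\mathcal{F}_{n-1}]$, it forms the submartingale $X_n^+=\sum_{k\le n}D_k I_{Y_{k-1}>0}$ and the supermartingale $X_n^-=X_0+\sum_{k\le n}D_k I_{Y_{k-1}\le 0}$, so that $X_n=X_n^++X_n^-$, and then applies the martingale convergence theorem \emph{twice}: once to $X_n^+$, after an iterated-conditioning bound on its positive part, and once to $-X_n^-$, whose $L^1$ bound comes from domination by $X_n^+$ via $X_n\ge 0$. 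You instead use the Doob/Fisk compensator decomposition: you extract the purely predictable nondecreasing part $A_n^+=\sum_{k\le n}\big(\mathbb{E}[D_k\mid\mathcal{F}_{k-1}]\big)^+$, which converges by monotone convergence because the hypothesis bounds $\mathbb{E}[A_n^+]$ uniformly (note that both proofs must read the hypothesis through the positive part; the ``$\min$'' in the displayed condition is evidently a typo for ``$\max$'', since $\mathbb{E}[D_n I_{\mathbb{E}[D_n\mid\mathcal{F}_{n-1}]>0}]=\mathbb{E}\big[\big(\mathbb{E}[D_n\mid\mathcal{F}_{n-1}]\big)^+\big]$, and with ``$\min$'' the condition is vacuous). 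All of the noise is then packaged into the single supermartingale $Z_n=X_n-A_n^+$, whose $L^1$ bound $\sup_n\mathbb{E}[Z_n^-]\le\mathbb{E}[A_\infty^+]<\infty$ is immediate from $Z_n\ge -A_\infty^+$ and $X_n\ge 0$. What your route buys: only one invocation of the convergence theorem, a monotone part that converges for free, and a transparent $L^1$-boundedness argument; in particular it sidesteps the paper's most delicate step, the chain of $\min$-inequalities bounding $\mathbb{E}[\min\{X_n^+,0\}]$, whose sign conventions as printed are questionable. What the paper's route buys is that it never needs to form the compensator explicitly. Both arguments ultimately rest on the same tower-property identity and on the implicit integrability of $X_n$, which you correctly flag.
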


\begin{proof}
{\bf Step 1. Decomposition.}
We claim that $(X_n)_{n\ge0}$ can be decomposed into the sum of a submartingale and a supermartingale.
The construction is as follows.

Let $Y_{n-1}:=\mathbb{E}[D_n|\mathcal{F}_{n-1}]$, then $I_{Y_{n-1}>0}\in\mathcal{F}_{n-1}$.
Letting $D_n^+:=D_nI_{Y_{n-1}>0}$, we have that
$$\mathbb{E}[D_n^+|\mathcal{F}_{n-1}]=Y_{n-1}I_{Y_{n-1}>0}\ge0.$$
Similarly, we let $D_n^-:=D_nI_{Y_{n-1}\le0}.$
Therefore, if we set
\begin{align*}
&X_n^+=X_{n-1}^+ + D_n^+, \quad X_0^+=0,\\
\text{ and } &X_n^-=X_{n-1}^- +D_n^-,\quad X_0^-=X_0,
\end{align*}
then $X_n^+$, $X_n^-$ are sub- and super-martingale with respect to $\mathcal{F}_n$, respectively, and $X_n=X_n^++X_n^-$.

{\bf Step 2. Martingale convergence.}
From Theorem 5.2.8 in \cite{Durrett2010}, if $U_n$ is a submartingale with
$\sup \mathbb{E}\min\{U_n,0\}<\infty$,
then as $n\rightarrow\infty$, $U_n$ converges a.s. to an absolutely integrable limit $U$.
For here, notice that
\begin{align*}
&\mathbb{E}\min\{X_n^+,0\}\\=&\mathbb{E}\big[\mathbb{E}[\min\{X_n^+,0\}|\mathcal{F}_{n-1}]\big]\\
=&\mathbb{E}\big[\mathbb{E}[\min\{X_{n-1}^+ +Y_{n-1}I_{Y_{n-1}>0},0\}|\mathcal{F}_{n-1}]\big]\\
\le&\mathbb{E}\big[\mathbb{E}[\min\{X_{n-1}^+ ,0\} +\min\{Y_{n-1},0\}|\mathcal{F}_{n-1}]\big]\\
=&\mathbb{E}\big[\min\{X_{n-1}^+,0\}\big]+\mathbb{E}\big[\min\{\mathbb{E}[D_n|\mathcal{F}_{n-1}],0\}\big],
\end{align*}
which iteratively derives
\begin{align*}
&\mathbb{E}\big[\min\{X_n^+,0\}\big]\\\le&
\mathbb{E}\big[\min\{X_0^+,0\}\big]+
\sum_{n=1}^\infty\mathbb{E}\big[\min\{\mathbb{E}[D_n|\mathcal{F}_{n-1}],0\}\big]<\infty.
\end{align*}
Using the cited martingale convergence theorem, we have $X_n^+\overset{a.s.}\rightarrow X_\infty^+.$

On the other hand,
the submartingale $-X_n^-$ is no more than $X_n^+$, since $X_n^+ +X_n^-=X_n\ge0$.
Then we have $\sup\mathbb{E}\big[\min\{-X_n^-,0\}\big]\le\sup\mathbb{E}\big[\min\{X_n^+,0\}\big]<\infty$.
Again, the martingale convergence theorem shows that $X_n^-\overset{a.s.}\rightarrow X_\infty^-.$
Summing up those two sequences yields the desired convergence.
Further, the non-negativeness of $X_n$ provides that $X_\infty\ge0.$
\end{proof}

\section{Proof of Lemma \ref{lem:descent}}\label{app:CensorUpdate}
We first prove the CSGD part, and then the SGD part can be obtained with slight modifications.
Notice that
\begin{align}
\|\hat\nabla^k\|^2=&\|\sum_{m=1}^M\hat\nabla_m^k\|^2\notag\\
\le&(1+\epsilon)\|\sum_{m=1}^M\nabla_m^k\|^2+
(1+\frac1{\epsilon})\|\sum_{m=1}^M(\hat\nabla_m^k-\nabla_m^k)\|^2\notag\\
\le&(1+\epsilon)\|\sum_{m=1}^M\nabla_m^k\|^2+
(1+\frac1{\epsilon})M\sum_{m=1}^M\|\hat\nabla_m^k-\nabla_m^k\|^2\notag\\
\le&(1+\epsilon)\|\sum_{m=1}^M\nabla_m^k\|^2+(1+\frac1{\epsilon})M\sum_{m=1}^M\tau^k\notag\\
=&(1+\epsilon)\|\nabla^k\|^2+(1+\frac1{\epsilon})M^2\tau^k,\label{eq:nabla-hat}
\end{align}
where the first inequality comes from $\|y+z\|^2=\|y\|^2+\|z\|^2+2\langle y,z\rangle
\le (1+\epsilon)\|y\|^2+(1+\frac1{\epsilon})\|z\|^2$ for any $\epsilon>0$.
Secondly, the inequality $\langle y,z\rangle\le\frac\epsilon4\|y\|^2+\frac1{\epsilon}\|z\|^2$ gives that
\begin{align}
&\langle\nabla F(\mathbf{x}^{k-1}),\nabla  F(\mathbf{x}^{k-1})-\hat\nabla^k\rangle \label{eq:nabla-hat2} \\
=&\langle\nabla F(\mathbf{x}^{k-1}),\nabla  F(\mathbf{x}^{k-1})-\nabla^k+\nabla^k-\hat\nabla^k\rangle\notag\\
=&\langle\nabla F(\mathbf{x}^{k-1}),\nabla  F(\mathbf{x}^{k-1})-\nabla^k\rangle+\langle\nabla F(\mathbf{x}^{k-1}),\nabla^k-\hat\nabla^k\rangle\notag\\
\le&\frac\epsilon4\|\nabla F(\mathbf{x}^{k-1})\|^2+\frac1{\epsilon}\|\nabla F(\mathbf{x}^{k-1})-\nabla^k\|^2\notag\\
&+\frac\epsilon4\|\nabla F(\mathbf{x}^{k-1})\|^2+\frac1{\epsilon}\|\nabla^k-\hat \nabla^k\|^2\notag\\
=&\frac\epsilon2\|\nabla F(\mathbf{x}^{k-1})\|^2+\frac1{\epsilon}\|\nabla F(\mathbf{x}^{k-1})-\nabla^k\|^2+\frac1{\epsilon}M^2\tau^k. \nonumber
\end{align}
From \eqref{CSGD} and the Lipschitz continuity of $\nabla F$, we have
\begin{align*}
&F(\mathbf{x}^k)-F(\mathbf{x}^{k-1})\\
=&F(\mathbf{x}^{k-1}-\alpha\hat\nabla^k)-F(\mathbf{x}^{k-1})\\
\le& -\alpha \langle\nabla F(\mathbf{x}^{k-1}),\hat\nabla^k\rangle
+\frac L2\alpha^2\|\hat\nabla^k\|^2\\
=& -\alpha \| \nabla F(\mathbf{x}^{k-1})\|^2+
\alpha \langle\nabla F(\mathbf{x}^{k-1}),\nabla  F(\mathbf{x}^{k-1})-\hat\nabla^k\rangle \nonumber \\
& +\frac{L}2\alpha^2\|\hat\nabla^k\|^2\\
\le&-\alpha\bigg(1 -\frac\epsilon2 -(1+\epsilon)\frac{L}2\alpha\bigg)\| \nabla F(\mathbf{x}^{k-1})\|^2 \nonumber \\
& +\alpha M^2\bigg(\frac{1}{\epsilon}+(1+\frac1{\epsilon})\frac{L}2\alpha \bigg)\tau^k
+\frac{\alpha }{\epsilon}\|\nabla F(\mathbf{x}^{k-1})-\nabla^k\|^2,
\end{align*}
where the last inequality uses \eqref{eq:nabla-hat} and \eqref{eq:nabla-hat2}.

For SGD with update rule \eqref{SGD}, \eqref{lem1} can be derived by replacing \eqref{eq:nabla-hat2} with
\begin{align*}
&\langle\nabla F(\bar{\mathbf{x}}^{k-1}),\nabla  F(\bar{\mathbf{x}}^{k-1})-\bar\nabla^k\rangle\\
\le&\frac\epsilon2\|\nabla F(\bar{\mathbf{x}}^{k-1})\|^2+\frac1{2\epsilon}\|\nabla F(\bar{\mathbf{x}}^{k-1})-\bar\nabla^k\|^2.
\end{align*}

\section{Proofs of Theorems \ref{thm:as} and \ref{thm:rate}}\label{app2}
\begin{lemma}[Lyapunov descent]\label{lem:lya}
For $k\ge1$, denote $\mathcal{F}_{k-1}=\sigma(\{\mathbf{x}^j,j\le k-1\})$ and
$\bar{\mathcal{F}}_{k-1}=\sigma(\{\bar{\mathbf{x}}^j,j\le k-1\})$ as the natural sigma-fields generated by the random vectors before iteration $k-1$ in CSGD and SGD, respectively.
Under Assumptions \ref{ass:global} and \ref{ass:convex},
if we choose $\beta_d=\frac{D+1-d}{9D}\alpha,w=\frac1{60}$,
step-size $\alpha \le\min\{\frac{3}{2D\mu},\frac1{3L}\}$,
then the Lyapunov functions satisfy
\begin{align}\label{eq:lya}
\mathbb{E}\big[V^k-V^{k-1}|\mathcal{F}_{k-1}\big]\le&
-\rho V^{k-1}+R^k,\\
\mathbb{E}\big[\bar V^k-\bar V^{k-1}|\bar{\mathcal{F}}_{k-1}\big]\le&
-\rho  \bar V^{k-1}+\bar{R}^k,\label{eq:lya-sgd}
\end{align}
where $\rho=\frac13\mu\alpha$, $R^k=\frac{10\alpha\sigma^k}3+\frac{7\alpha G^2}{3B^k}$, and $\bar{R}^k=\frac{7\alpha G^2}{3B^k}$.
\end{lemma}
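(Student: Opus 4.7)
}
The plan is to combine the one-step descent of Lemma \ref{lem:descent} with a telescoping identity for the historical-gradient tail of the Lyapunov function, and then to absorb each non-negative term on the right-hand side into either the $-\rho V^{k-1}$ contribution (via Polyak--\L{}ojasiewicz) or the residual $R^k$ (via the variance bound and the form of $\tau^k$). I will carry out the CSGD case in detail; the SGD bound follows by the same argument with the $\tau^k$ contributions and the $\hat\nabla$-to-$\nabla$ inflation removed.

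First, apply Lemma \ref{lem:descent} with $\epsilon=\frac12$ as suggested in the Remark, so the descent inequality reads
\begin{align*}
F(\mathbf{x}^k)-F(\mathbf{x}^{k-1}) \le -\tfrac{\alpha}{2}\|\nabla F(\mathbf{x}^{k-1})\|^2 + 2\alpha\|\nabla F(\mathbf{x}^{k-1})-\nabla^k\|^2 + \tfrac{5\alpha}{2}M^2\tau^k,
\end{align*}
where I used $\alpha\le 1/(3L)$ to bound $1-\epsilon/2-(1+\epsilon)L\alpha/2\ge \tfrac12$ and $1/\epsilon+(1+1/\epsilon)L\alpha/2\le \tfrac52$. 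Next, with $\beta_d=(D+1-d)\alpha/(9D)$, a direct term-by-term comparison yields the telescoping identity
\begin{align*}
S^k-S^{k-1}=\tfrac{\alpha}{9}\|\hat\nabla^k\|^2-\tfrac{\alpha}{9D}\sum_{d=1}^{D}\|\hat\nabla^{k-d}\|^2,
\end{align*}
where $S^k$ denotes the tail sum in $V^k$. Using \eqref{eq:nabla-hat} with $\epsilon=\tfrac12$ gives $\|\hat\nabla^k\|^2\le \tfrac32\|\nabla^k\|^2+3M^2\tau^k$, and the i.i.d.\ sampling underlying $\nabla^k$ gives $\mathbb{E}[\|\nabla^k\|^2|\mathcal{F}_{k-1}]\le \|\nabla F(\mathbf{x}^{k-1})\|^2+G^2/B^k$ together with $\mathbb{E}[\|\nabla F(\mathbf{x}^{k-1})-\nabla^k\|^2|\mathcal{F}_{k-1}]\le G^2/B^k$. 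Substituting these bounds and the definition $M^2\tau^k=\tfrac{w}{D}\sum_{d=1}^D\|\hat\nabla^{k-d}\|^2+\sigma^k$ into $\mathbb{E}[V^k-V^{k-1}|\mathcal{F}_{k-1}]$ produces a bound depending only on the $\mathcal{F}_{k-1}$-measurable quantities $\|\nabla F(\mathbf{x}^{k-1})\|^2$, the lagged norms $\|\hat\nabla^{k-d}\|^2$, and the residuals $\sigma^k$ and $G^2/B^k$.

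The coefficient bookkeeping is where the specific choices $w=1/60$ and $\alpha\le\min\{3/(2D\mu),1/(3L)\}$ are used. The coefficient of $\|\nabla F(\mathbf{x}^{k-1})\|^2$ is $-\tfrac{\alpha}{2}+\tfrac{\alpha}{9}\cdot\tfrac32=-\tfrac{\alpha}{2}+\tfrac{\alpha}{6}=-\tfrac{\alpha}{3}$, of which I spend $-\tfrac{\alpha}{6}\|\nabla F\|^2\le -\rho(F(\mathbf{x}^{k-1})-F^*)$ via the Polyak--\L{}ojasiewicz inequality (since $\rho=\mu\alpha/3$) and drop the remaining $-\tfrac{\alpha}{6}\|\nabla F\|^2$. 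The total coefficient of $M^2\tau^k$ is $\tfrac{5\alpha}{2}+\tfrac{\alpha}{3}=\tfrac{17\alpha}{6}$, so the contribution of $\sigma^k$ is $\tfrac{17\alpha}{6}\sigma^k\le\tfrac{10\alpha}{3}\sigma^k$ and the contribution to each $\|\hat\nabla^{k-d}\|^2$ is $\tfrac{17\alpha w}{6D}=\tfrac{17\alpha}{360D}$; combined with the telescoping piece $-\tfrac{\alpha}{9D}\|\hat\nabla^{k-d}\|^2$, the net coefficient per $\|\hat\nabla^{k-d}\|^2$ is $-\tfrac{23\alpha}{360D}$. The requirement $-\tfrac{23\alpha}{360D}\le -\rho\beta_d$ reduces to $\mu D\alpha\le 621/360$, which is implied by $\alpha\le 3/(2D\mu)$. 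Finally the $G^2/B^k$ contributions sum to $(2\alpha+\tfrac{\alpha}{6})G^2/B^k=\tfrac{13\alpha G^2}{6B^k}\le\tfrac{7\alpha G^2}{3B^k}$, completing the CSGD bound \eqref{eq:lya}.

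The main obstacle is precisely this coefficient matching: the tail weights $\beta_d$, the censoring weight $w$, the step-size cap, and the $\epsilon$ in Lemma \ref{lem:descent} must be jointly consistent so that the negative contribution $-\tfrac{\alpha}{9D}\sum_d\|\hat\nabla^{k-d}\|^2$ coming from the telescoping both absorbs the $\tau^k$-induced positive contribution and leaves enough slack for $-\rho S^{k-1}$. The SGD inequality \eqref{eq:lya-sgd} is an easier instance of the same argument: there is no $\tau^k$ term, the inflation \eqref{eq:nabla-hat} is unnecessary, so the coefficient of $\|\nabla F(\bar{\mathbf{x}}^{k-1})\|^2$ is already $-\tfrac{\alpha}{2}+\tfrac{\alpha}{9}$ and the coefficient of $G^2/B^k$ is $\alpha+\tfrac{\alpha}{9}=\tfrac{10\alpha}{9}\le \tfrac{7\alpha}{3}$, yielding the claimed $\bar R^k$.
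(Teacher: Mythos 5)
Your proposal is correct and takes essentially the same route as the paper's own proof: apply Lemma \ref{lem:descent} with $\epsilon=\bar\epsilon=\frac12$, telescope the weighted tail $\sum_d\beta_d\|\hat\nabla^{k-d+1}\|^2$, substitute the threshold \eqref{threshold}, and absorb the leftover terms into $-\rho V^{k-1}$ via the Polyak-\L ojasiewicz inequality and the step-size cap $\alpha\le\frac{3}{2D\mu}$. The only (harmless) difference is in how $\mathbb{E}[\|\hat\nabla^k\|^2|\mathcal{F}_{k-1}]$ is bounded --- you use \eqref{eq:nabla-hat} together with the exact bias--variance decomposition, giving coefficients $\tfrac{17\alpha}{6}$ and $\tfrac{13\alpha}{6}$, while the paper uses the cruder three-term inequality $\|\hat\nabla^k\|^2\le3(\|\hat\nabla^k-\nabla^k\|^2+\|\nabla^k-\nabla F(\mathbf{x}^{k-1})\|^2+\|\nabla F(\mathbf{x}^{k-1})\|^2)$, giving $3(\alpha+\beta_1)$ and $\tfrac{7\alpha}{3}$ --- and both sets of constants fit within the claimed $R^k$ and $\bar R^k$.
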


\begin{proof}
By convention, define $\beta_{D+1}=0$.
From the definition of $V^k$ in \eqref{Lyapunov} and the inequality in \eqref{lem1},
\begin{align}
&V^k-V^{k-1}\notag\\
=&F(\mathbf{x}^k)-F(\mathbf{x}^{k-1})+\beta_1\|\hat\nabla^{k}\|^2-\sum_{d=1}^{D}(\beta_d-\beta_{d+1})\|\hat\nabla^{k-d}\|^2\nonumber\\
\le&\Delta^k+\beta_1\|\hat{\nabla}^k\|^2-\sum_{d=1}^{D}(\beta_d-\beta_{d+1})\|\hat\nabla^{k-d}\|^2.\label{eq:Vk-Vk-1_bound}
\end{align}
Plugging in the definition of $\Delta^k$, and taking conditional expectation on $\mathcal{F}_{k-1}$ and $\epsilon=\bar\epsilon=\frac12$ hereafter,
we have
\begin{align}
&\mathbb{E}\big[\Delta^k|\mathcal{F}_{k-1}\big]\notag\\=&-\frac34\alpha(1 -L\alpha)\| \nabla F(\mathbf{x}^{k-1})\|^2
+\alpha M^2\bigg(2+\frac32L\alpha \bigg)\tau^k\notag\\&+2\alpha\mathbb{E}\|\nabla F(\mathbf{x}^{k-1})-\nabla^k\|^2\notag\\
\le&-\frac12\alpha\| \nabla F(\mathbf{x}^{k-1})\|^2+3\alpha M^2\tau^k+\frac{2\alpha G^2}{B^k},
\end{align}
since $L\alpha\le\frac13$ and
\begin{align}
&\mathbb{E}\|\nabla F(\mathbf{x}^{k-1})-\nabla^k\|^2\notag\\=&\frac1{(B^k)^2}\mathbb{E}\bigg\|\sum_{b=1}^{B^k}\big(\nabla F(\mathbf{x}^{k-1})-\nabla f(\mathbf{x}^{k-1};\xi^{k,b})\big)\bigg\|^2\notag\\
\le&\frac1{B^k}G^2.\label{eq:G2}
\end{align}
Further, \eqref{eq:G2} and $\|\hat{\nabla}^k\|^2\le3( \|\hat{\nabla}^k-\nabla^k\|^2+\|\nabla^k-\nabla F(\mathbf{x}^{k-1})\|^2+\|\nabla F(\mathbf{x}^{k-1})\|^2 )$ give that
\begin{equation*}
\mathbb{E}\big[\|\hat{\nabla}^k\|^2|\mathcal{F}_{k-1}\big]\le3\left( M^2\tau^k+\frac{G^2}{B^k}+\|\nabla F(\mathbf{x}^{k-1})\|^2 \right).
\end{equation*}
Thus, conditioned on $\mathcal{F}_{k-1}$, \eqref{eq:Vk-Vk-1_bound} becomes
\begin{align}
&\mathbb{E}\big[V^k-V^{k-1}|\mathcal{F}_{k-1}\big]\notag\\
\le&-\frac12\alpha\| \nabla F(\mathbf{x}^{k-1})\|^2+3\alpha M^2\tau^k+\frac{2\alpha G^2}{B^k}\notag\\
&+3\beta_1\left( M^2\tau^k+\frac{G^2}{B^k}+\|\nabla F(\mathbf{x}^{k-1})\|^2 \right)\notag\\
&-\sum_{d=1}^{D}(\beta_d-\beta_{d+1})\|\hat\nabla^{k-d}\|^2\notag\\
\overset{\eqref{threshold}}=&-\frac{\alpha - 6 \beta}{2}\| \nabla F(\mathbf{x}^{k-1})\|^2+3(\alpha+\beta_1)\sigma^k+(2\alpha+3\beta_1)\frac{G^2}{B^k}\notag\\
&-\sum_{d=1}^{D}(\beta_d-\beta_{d+1}-3(\alpha+\beta_1)\frac{w}D)\|\hat\nabla^{k-d}\|^2\notag\\
\le&-\frac16\alpha\| \nabla F(\mathbf{x}^{k-1})\|^2-\frac13\mu\alpha\sum_{d=1}^{D}\beta_d\|\hat\nabla^{k-d}\|^2\notag\\&+\frac{10}3\alpha\sigma^k+\frac73\alpha\frac{G^2}{B^k},\label{eq:Vbound1}
\end{align}
since $\beta_d=\frac{D+1-d}{9D}\alpha$, $w=\frac1{60}$, and
$$\beta_d-\beta_{d+1}-3(\alpha+\beta_1)\frac{w}D=\frac1{2D}\beta_1\ge\frac1{2D}\beta_d\ge\frac13\mu\alpha\beta_d$$
from $\frac13\mu\alpha\le\frac1{2D}$.
From Assumption \ref{ass:convex} that
$$\|\nabla F(\mathbf{x}^{k-1})\|^2\ge2\mu\big(F(\mathbf{x}^{k-1})-F^*\big),$$
and the notation $R^k=\frac{10}3\alpha\sigma^k+\frac73\alpha\frac{G^2}{B^k}$,
we obtain
\begin{align}
&\mathbb{E}\big[V^k-V^{k-1}|\mathcal{F}_{k-1}\big]\notag\\
\le&-\frac13\mu\alpha( F( \mathbf{x}^{k-1})-F^*)-\frac13\mu\alpha\sum_{d=1}^{D}\beta_d\|\hat\nabla^{k-d}\|^2+R^k\notag\\
\le&-\frac13\mu\alpha V^{k-1}+R^k.\label{eq:Vbound3}
\end{align}
Further denoting $\rho=\frac13\mu\alpha$ in \eqref{eq:Vbound3} leads to \eqref{eq:lya}.

Analogously, for the SGD algorithm (c.f. \eqref{SGD}),
\begin{align*}
&\mathbb{E}\big[\bar V^k-\bar V^{k-1}|\bar{\mathcal{F}}_{k-1}\big]\notag\\
\le&-\frac16\alpha\| \nabla F(\bar{\mathbf{x}}^{k-1})\|^2-\sum_{d=1}^{D}(\beta_d-\beta_{d+1})\|\bar\nabla^{k-d}\|^2+\frac73\alpha\frac{G^2}{B^k}\\
\le&-\frac13\mu\alpha \bar{V}^{k-1}+\frac73\alpha\frac{G^2}{B^k},
\end{align*}
since $\beta_d-\beta_{d+1}=\frac1D\beta_1\ge\frac1D \beta_d\ge\frac13\mu\alpha\beta_d$.
Now denoting $\bar{R}^k=\frac73\alpha\frac{G^2}{B^k}$ completes the proof of \eqref{eq:lya-sgd}.
\end{proof}

Now we can start our proofs of Theorems \ref{thm:as} and \ref{thm:rate}.
\begin{proof}
{\bf Step 1. A.s. convergence.}
From Lemma \ref{lem:lya} and the non-negativeness of Lyapunov functions,
we have
\begin{align*}
&\sum_{k=1}^\infty\mathbb{E}\big[\mathbb{E}[V^k-V^{k-1}|\mathcal{F}_{k-1}]\vee0\big]\\
\le&\sum_{k=1}^\infty\left(-\rho  V^{k-1}+R^k\right)\vee0
\le\sum_{k=1}^\infty R^k<\infty,
\end{align*}
where $R^k$ is summable since both $\sigma^k$ and $\frac1{B^k}$ is summable.
Therefore, from Lemma \ref{lem:mar}, there exists a random variable $V^\infty$ such that
$V^k\overset{a.s.}\rightarrow V^\infty\ge0$ as $k\rightarrow\infty$.

To conclude that $V^\infty\overset{a.s.}=0$, we assume $V^k\rightarrow V^\infty\ge e>0$
on some set $A$ in the probability space with $P(A)>0$.
For any $\omega\in A$, there exists an integer $k_0=k_0(\omega)$,
such that $V^{k-1}\ge\frac{e}2$ for all $k> k_0$.
Then from Lemma \ref{lem:lya},
$$\mathbb{E}[V^k-V^{k-1}|\mathcal{F}_{k-1}]\le
-\mu\alpha \frac{e}2 +R^k.$$
Iteratively using this fact, we obtain
\begin{align*}
\mathbb{E}[V^k-V^{k_0}|\mathcal{F}_{k_0}]\le&-\frac{\mu{e}}2\sum_{j=k_0+1}^k\alpha
+\sum_{j=k_0+1}^kR^k,
\end{align*}
which goes to $-\infty$ as $k\rightarrow\infty$.
Therefore, when $k$ is sufficiently large, $\mathbb{E}[V^k|\mathcal{F}_{k_0}]\le0$ on a set $A$ with positive probability, which is a contradiction.
In summary, $V^k\overset{a.s.}\rightarrow0.$

{\bf Step 2. Convergence rates of $V^k$ and $\bar{V}^k$.}
By conditioning on $\mathcal{F}_{k-1}$ first and then conditioning on $\mathcal{F}_0$, Lemma \ref{lem:lya} gives an important inequality
\begin{equation}
\mathbb{E}\big[V^k|\mathcal{F}_0\big]\le\big(1-\rho\big) \mathbb{E}\big[V^{k-1}|\mathcal{F}_0\big]+R^k,\label{eq:Vk|F0}
\end{equation}
which holds for all $k\ge1$.
Iteratively using \eqref{eq:Vk|F0} yields
\begin{align}
&\mathbb{E}\big[V^k|\mathcal{F}_0\big] \label{eq:Vk|F0-csgd}\\
\le&(1-\rho)^k V^0+\sum_{j=1}^k R^j(1-\rho)^{k-j}\nonumber\\
\overset{\eqref{stepsize}}\le&(1-\rho)^k \left(V^0+\frac{\alpha(1-\rho)}3(\frac{10\sigma^0}{\eta_2-\rho}+\frac{7G^2}{B^0(\eta_1-\rho)})\right).\nonumber
\end{align}
Similar result holds for SGD that
\begin{align}
&\mathbb{E}\big[\bar{V}^k|\mathcal{F}_0\big]\notag\\
\le&(1-\rho)^k V^0+\sum_{j=1}^k \bar{R}^j(1-\rho)^{k-j}\notag\\
\overset{\eqref{stepsize}}\le&(1-\rho)^k \left(V^0+\frac{7\alpha(1-\rho)G^2}{3B^0(\eta_1-\rho)}\right).\label{eq:Vk|F0-sgd}
\end{align}
\eqref{eq:Vk|F0-csgd} and \eqref{eq:Vk|F0-sgd} are exactly \eqref{Vbound}.
\end{proof}

\subsection{Proof of Theorem \ref{thm:sav}}
\begin{proof}
From the Markov inequality and \eqref{eq:G2},
\begin{align}
&\mathbb{P}(\|\nabla_m^k-\nabla F_m(\mathbf{x}^{k-1})\|^2>t^k)\notag\\\le&\frac1{t^k}\mathbb{E}\|\nabla_m^k-\nabla F_m(\mathbf{x}^{k-1})\|^2\le\frac1{t^kB^k}G_m^2.\label{markov}
\end{align}
We mainly focus on sample paths where $$\|\nabla_m^k-\nabla F_m(\mathbf{x}^{k-1})\|^2\le t^k$$ holds for all $k$ and $m$ with $t^k=\frac{\sigma^{k+D}}{6M^2}$.
Since
\begin{align}
&\sum_{k=1}^\infty\sum_{m=1}^M \frac1{t^kB^k}G_m^2\notag\\
\le&\sum_{k=1}^\infty\frac{6M^2\sum_{m=1}^MG_m^2}{\sigma^0(1-\eta_2)^{k+D}B^0(1-\eta_1)^{-k}}=\delta,\label{eq:delta}
\end{align}
such a sample path appears with probability at least $1-\delta$.
Suppose that at iteration $k$
when the worker $m$ decides to upload its latest gradient $\nabla_m^k$,
the most recent iteration that it did communicate with the server is $k-d'$; that is, $\hat\nabla_m^{k-1}=\nabla_m^{k-d'}$.

If $1\le d'\le D$, we next prove that it contradicts with the censoring threshold.
On the one hand, $\|\nabla_m^k-\nabla_m^{k-d'}\|^2>\tau^k$ since communication happens at iteration $k$.
On the other hand, we have
\begin{align}
&\|\nabla_m^k-\nabla_m^{k-d'}\|^2\notag\\
\le&3\big(\|\nabla_m^k\!-\!\nabla F_m(\mathbf{x}^{k-1})\|^2+\|\nabla F_m(\mathbf{x}^{k-d'-1})\!-\!\nabla_m^{k-d'}\|^2\notag\\&+\|\nabla F_m(\mathbf{x}^{k-1})\!-\!\nabla F_m(\mathbf{x}^{k-d'-1})\|^2\big)\notag\\
\le&\frac{\sigma^{k+D}}{2M^2}+\frac{\sigma^{k-d'+D}}{2M^2}+3d'\sum_{d=1}^{d'}\|\nabla F_m(\mathbf{x}^{k-d})-\nabla F_m(\mathbf{x}^{k-d-1})\|^2\notag\\
\overset{(a)}\le&\frac{\sigma^{k}}{2M^2}+\frac{\sigma^{k}}{2M^2}+3d'L_m\sum_{d=1}^{d'}\|\mathbf{x}^{k-d}-\mathbf{x}^{k-d-1}\|^2\notag\\
=&\frac{\sigma^{k}}{M^2}+3d'L_m\alpha^2\sum_{d=1}^{d'}\|\hat\nabla^{k-d}\|^2\notag\\
\overset{(b)}\le&\frac{ \sigma^k }{M^2}+\frac1{M^2}\frac{w}D\sum_{d=1}^{d'}\|\hat\nabla^{k-d}\|^2\le\tau^k, \label{eq:contra}
\end{align}
where $(a)$ comes from $ \sigma^{k+D}\le \sigma^{k-d'+D}\le\sigma^{k} $ and the Lipschitz continuity of $\nabla F_m$ in Assumption \ref{ass:local}, and $(b)$ comes from $\alpha \le\frac1{6\sqrt{5 L_m}MD}$.
The contradiction results in the conclusion that
at most one communication happens in $D$ consecutive iterations for every worker.
Consequently, the number of communications is at most $\lceil\frac{K-1}{D}\rceil+1$
after $K$ iterations for worker $m$.

To reach the same accuracy after running CSGD for $K$ iterations,
from Theorem \ref{thm:rate}
the iteration complexity of SGD is lower-bounded that $$\bar{K}\ge K-1+\big(\log(1-\rho)\big)^{-1}\log(\frac{C_{\rm SGD}}{C_{\rm CSGD}}).$$
Then in order to have less communication complexity, a sufficient condition is that
\begin{equation}
\lceil\frac{K-1}{D}\rceil+1\le \bar{K}.\label{eq:suff}
\end{equation}
Since
$\lceil\frac{K-1}{D}\rceil+1\le\frac{K}{2}+2$ for $D\ge2$,
\eqref{eq:suff} holds with
$$K\ge 6,$$
which completes the proof.
\end{proof}

\section{Proof of Theorem \ref{thm:nonconvex}}
\begin{proof}
{\bf Step 1. A.s. convergence (not necessarily converging to zero).}
Without Assumption \ref{ass:convex}, it is unable to derive Lemma \ref{lem:lya}, but \eqref{eq:Vbound1} in its proof still holds, from which we know that
\begin{equation}
\mathbb{E}\big[V^k-V^{k-1}|\mathcal{F}_{k-1}\big]\le R^k,
\end{equation}
where $R^k$ is summable. So applying Lemma \ref{lem:mar} gives the a.s. convergence that
$V^k\overset{a.s.}\rightarrow V^\infty\ge0.$
Hereafter we focus on the case
\begin{equation}\label{eq:path}
\begin{cases}
V^k\rightarrow V^\infty\ge0,\\
\|\nabla_m^k-\nabla F_m(\mathbf{x}^{k-1})\|^2\le \frac{\sigma^{k+D}}{6M^2},\quad \forall k,m,
\end{cases}
\end{equation}
which happens with probability at least $1-\delta$, following the same calculation as in \eqref{eq:delta}.

{\bf Step 2. Bounds of the Lyapunov differences.}
Without Assumption \ref{ass:convex}, inequality similar to \eqref{eq:Vbound1} still holds.
The only difference comes from using
\begin{align*}
&\|\nabla^k-\nabla F(\mathbf{x}^{k-1})\|^2\notag\\\le& M^2\sum_{m=1}^M\|\nabla_m^k-\nabla F_m(\mathbf{x}^{k-1})\|^2\le \frac{\sigma^{k+D}}{6}
\end{align*}
 instead of taking expectations that
$$\mathbb{E}\|\nabla^k-\nabla F(\mathbf{x}^{k-1})\|^2\le\frac{G^2}{B^k}.$$
Thus, replacing $\frac{G^2}{B^k}$ in \eqref{eq:Vbound1} by $\frac{\sigma^{k+D}}{6}$ yields
\begin{align}
&V^k-V^{k-1}\notag\\
\le&-(\frac12\alpha-3\beta_1)\| \nabla F(\mathbf{x}^{k-1})\|^2\notag\\
&-\sum_{d=1}^{D}(\beta_d-\beta_{d+1}-3(\alpha+\beta_1)\frac{w}D)\|\hat\nabla^{k-d}\|^2\notag\\
&+3(\alpha+\beta_1)\sigma^k+(2\alpha+3\beta_1)\frac{\sigma^{k+D}}{6}\notag\\
\le&-\frac16\alpha\| \nabla F(\mathbf{x}^{k-1})\|^2-\sum_{d=1}^{D}\frac{\alpha}{18D}\|\hat\nabla^{k-d}\|^2\notag\\&+\frac{10}3\alpha\sigma^k+\frac7{18}\alpha\sigma^{k+D},\label{eq:dif}
\end{align}
where $S^k=\frac{10}3\alpha\sigma^k+\frac7{18}\alpha\sigma^{k+D}$ is summable with $\sum_{k=1}^\infty S^k\le\frac{67}{18}\alpha\sigma^0\sum_{k=1}^\infty (1-\eta_2)^k=\frac{67\alpha(1-\eta_2)\sigma^0}{18\eta_2}$.
Then, summing \eqref{eq:dif} from $k=1$ to $k=K$ gives
\begin{align*}
&V^K-V^0\notag\\\le&-\frac16\alpha\sum_{k=1}^K\| \nabla F(\mathbf{x}^{k-1})\|^2
-\frac\alpha{18D}\sum_{d=1}^{D}\sum_{k=1}^K\|\hat\nabla^{k-d}\|^2\notag\\&+\frac{67\alpha(1-\eta_2)\sigma^0}{18\eta_2},
\end{align*}
which implies
\begin{align}
\sum_{k=1}^K\| \nabla F(\mathbf{x}^{k-1})\|^2\le&\frac6{\alpha} N_{CSGD},\label{eq:sum-nablaF}\\
\sum_{k=1}^K\| \hat\nabla^{k}\|^2\le&\frac{18}\alpha N_{CSGD},
\end{align}
with $N_{CSGD}=V^0+\frac{67\alpha(1-\eta_2)\sigma^0}{18\eta_2}.$
Then \eqref{eq:min-nonconvex} can be derived by finding contradiction if assuming it does not hold.

More generally, for any summable sequence $\sum_k a_k<\infty$, if there exists $e>0$ such that
$\min_{K_i< k\le K_{i+1}} a_k\ge\frac{e}{K_i}$ for increasing integers $\{K_i,i=1,\ldots,\infty\},$
then $\sum_k a_k\ge\min_{K_i< k\le K_{i+1}} a_k=\infty$ is contradictory to the assumption of summable sequence. Therefore, for any $\epsilon>0$, $\min_{1\le k\le K} a_k<\frac\epsilon{K}$ holds except for finite choices of $K$, which is equivalent to say $\min_{1\le k\le K} a_k=o(\frac1K)$.

{\bf Step 3. Communication-saving.} From \eqref{eq:sum-nablaF}, we have
$$\min_{0\le k\le K-1}\|\nabla F(\mathbf{x}^k)\|^2\le\frac6{\alpha K}N_{CSGD}.$$

Following the above three steps, SGD analogously satisfies
$$\min_{0\le k\le K-1}\|\nabla F(\bar{\mathbf{x}}^k)\|^2\le\frac6{\alpha K}N_{SGD}$$
with $N_{SGD}=V^0+\frac{7\alpha(1-\eta_2)\sigma^0}{18\eta_2}.$

In CSGD, the property that the number of communications is at most $\lceil\frac{K-1}{D}\rceil+1$
after $K$ iterations for each worker still holds in this nonconvex case, since the observation \eqref{eq:contra} can also be derived as we focus on the situation that
$$\|\nabla_m^k-\nabla F_m(\mathbf{x}^{k-1})\|^2\le \frac{\sigma^{k+D}}{6M^2},$$
which is the same as in Theorem \ref{thm:sav}.
On the other hand, to reach the same accuracy after running CSGD for $K$ iterations, the iteration complexity of SGD is $$\bar{K}=\lceil (K-1)\frac{N_{\rm SGD}}{N_{\rm CSGD}}\rceil.$$
Then in order to have less communication complexity,
$\lceil\frac{K-1}{D}\rceil+1\le\bar{K}.$
Since
$\lceil\frac{K-1}{D}\rceil+1\le\frac{K-1}{D}+2$ and
$\bar{K}\ge (K-1)\frac{N_{\rm SGD}}{N_{\rm CSGD}},$
it suffices to have
$$\frac1D\le\frac{N_{\rm SGD}}{N_{\rm CSGD}},\quad
K\ge1+\frac2{\frac{N_{\rm SGD}}{N_{\rm CSGD}}-\frac1D}.$$
Note that $\frac{N_{\rm SGD}}{N_{\rm CSGD}}=\frac{V^0+\frac{7\alpha(1-\eta_2)\sigma^0}{18\eta_2}}{V^0+\frac{67\alpha(1-\eta_2)\sigma^0}{18\eta_2}}\ge\frac7{67}$,
then CSGD saves communication if $$D\ge\lceil\frac{67}{7}\rceil=10,\quad K\ge1+\lceil\frac2{\frac7{67}-\frac1{10}}\rceil=448,$$
which completes the proof.
\end{proof}

\bibliographystyle{unsrt}
\bibliography{ref,uclaopt}

\end{document}